\newcommand{\citet}[1]{\cite{#1}}
\newcommand{\citep}[1]{\cite{#1}}
\pgfplotsset{
	tick label style={font=\tiny},
	label style={font=\footnotesize},
	legend style={font=\footnotesize},
}
\renewcommand{\eqref}[1]{Equation~(\ref{#1})}
\newcommand{\fakeItem}[1][$\bullet$]{\vspace{2mm}{\bf #1}~~}
\begin{document}

\title{Adaptive Crowdsourcing Algorithms for the Bandit Survey Problem}

\author{Ittai Abraham%
    \thanks{Microsoft Research Silicon Valley,
    Mountain View, CA 94040, USA.
    Email: \{ittaia,slivkins\}@microsoft.com.}
\and Omar Alonso%
    \thanks{Microsoft Corporation,
    Mountain View, CA 94040, USA.
    Email: \{omalonso, vakandyl\}@microsoft.com.}
\and Vasilis Kandylas$^\ddag$
\and Aleksandrs Slivkins$^\dag$
}

\date{First version: February 2013 \\ This version: May 2013}

\maketitle

\begin{abstract}
Very recently crowdsourcing has become the de facto platform for distributing and collecting human computation for a wide range of tasks and applications such as information retrieval, natural language processing and machine learning. Current crowdsourcing platforms have some limitations in the area of quality control. Most of the effort to ensure good quality has to be done by the experimenter who has to manage the number of workers needed to reach good results.

We propose a simple model for adaptive quality control in crowdsourced multiple-choice tasks which we call the \emph{bandit survey problem}. This model is related to, but technically different from the well-known multi-armed bandit problem. We present several algorithms for this problem, and support them with analysis and simulations.
Our approach is based in our experience conducting relevance evaluation for a large commercial search engine.
\end{abstract}

\section{Introduction}
\label{sec:intro}

In recent years there has been a surge of interest in automated methods for \emph{crowdsourcing}: a distributed model for problem-solving and experimentation that involves broadcasting the problem or parts thereof to multiple independent, relatively inexpensive workers and aggregating their solutions. Automation and optimization of this process at a large scale allows to significantly reduce the costs associated with setting up, running, and analyzing the experiments. Crowdsourcing is finding applications across a wide range of domains in information retrieval, natural language processing and machine learning.

A typical crowdsourcing workload is partitioned into \emph{microtasks} (also called Human Intelligence Tasks), where each microtask has a specific, simple structure and involves only a small amount of work. Each worker is presented with multiple microtasks of the same type, to save time on training. The rigidity and simplicity of the microtasks' structure ensures consistency across multiple multitasks and across multiple workers.

An important industrial application of crowdsourcing concerns web search. One specific goal in this domain is  \emph{relevance assessment}: assessing the relevance of search results. One popular task design involves presenting a microtask in the form of a query along with the results from the search engine. The worker has to answer one question about the relevance of the query to the result set. Experiments such as these are used to evaluate the performance of a search engine, construct training sets, and discover queries which require more attention and potential algorithmic tuning.

\xhdr{Stopping / selection issues.}
The most basic experimental design issue for crowdsourcing is the \emph{stopping issue}: determining how many workers the platform should use for a given microtask before it stops and outputs the aggregate answer. The workers in a crowdsourcing environment are not very reliable, so multiple workers are usually needed to ensure a sufficient confidence level. There is an obvious tradeoff here: using more workers  naturally increases the confidence of the aggregate result but it also increases the cost and time associated with the experiment. One fairly common heuristic is to use less workers if the microtasks seem easy, and more workers if the microtasks seem hard. However, finding a sweet-spot may be challenging, especially if different microtasks have different degrees of difficulty.

Whenever one can distinguish between workers, we have a more nuanced \emph{selection issue}: which workers to choose for a given microtask? The workers typically come from a large, loosely managed population. Accordingly, the skill levels vary over the population, and are often hard to predict in advance. Further, the relative skill levels among workers may depend significantly on a particular microtask or type of microtasks. Despite this uncertainty, it is essential to choose workers that are suitable or cost-efficient for the micro-task at hand, to the degree of granularity allowed by the crowdsourcing platform. For example, while targeting individual workers may be infeasible, one may be able to select some of the workers' attributes such as age range, gender, country, or education level. Also, the crowdsourcing platform may give access to multiple third-party providers of workers, and allow to select among those.

\xhdr{Our focus.}
This paper is concerned with a combination of the stopping / selection issues discussed above. We seek a clean setting so as to understand these issues at a more fundamental level.

\OMIT{As an initial paper
which studies these issues in the context of crowdsourcing,}

We focus on the scenario where several different populations of workers are available and can be targeted by the algorithm. As explained above, these populations may correspond to different selections of workers' attributes, or to multiple available third-party providers. We will refer to such populations as \emph{crowds}. We assume that the quality of each crowd depends on a particular microtask, and is not known in advance.

Each microtask is processed by an online algorithm which can adaptively decide which crowd to ask next. Informally, the goal is target the crowds that are most suitable for this microtask. Eventually the algorithm must stop and output the aggregate answer.

This paper focuses on processing a single microtask. This allows us to simplify the setting: we do not need to model how the latent quantities are correlated across different microtasks, and how the decisions and feedbacks for different microtasks are interleaved over time. Further, we separate the issue of learning the latent quality of a crowd for a given microtask from the issue of learning the (different but correlated) quality parameters of this crowd across multiple microtasks.

\xhdr{Our model: the \problem.}
We consider microtasks that are multiple-choice questions: one is given a set $\options$ of possible answers, henceforth called \emph{options}.
\asedit{We allow more than two options. (In fact, we find this case to be much more difficult than the case of only two options.)}
Informally, the microtask has a unique correct answer $x^* \in \options$, and the high-level goal of the algorithm is to find it.

The algorithm has access to several crowds: populations of workers. Each crowd $i$ is represented by a distribution $\D_i$ over $\options$, called the \emph{response distribution} for $i$. We assume that all crowds agree on the correct answer:%
\footnote{Otherwise the algorithm's high-level goal is less clear. We chose to avoid this complication in the current version.}
some option $x^*\in \options$ is the unique most probable option for each $\D_i$.

In each round $t$, the algorithm picks some crowd $i=i_t$ and receives an independent sample from the corresponding response distribution $\D_i$. Eventually the algorithm must stop and output its guess for $x^*$. Each crowd $i$ has a known per-round cost $c_i$. The algorithm has two objectives to minimize: the total cost $\sum_t c_{i_t}$ and the \emph{error rate}: the probability that it makes a mistake, i.e. outputs an option other than $x^*$.
\asedit{There are several ways to trade off these two objectives; we discuss this issue in more detail later in this section.}

The independent sample in the above model abstracts the following interaction between the algorithm and the platform: the platform supplies a worker from the chosen crowd, the algorithm presents the microtask to this worker, and the worker picks some option.

{\em Alternative interpretation.}
The crowds can correspond not to different populations of workers but to different ways of presenting the same microtask. For example, one could vary the instructions, the order in which the options are presented, the fonts and the styles, and the accompanying images.

{\em The name of the game.} Our model is similar to the extensively studied \emph{multi-armed bandit problem} (henceforth, \emph{MAB}) in that in each round an algorithm selects one alternative from a fixed and known set of available alternatives, and the feedback depends on the chosen alternative. However, while an MAB algorithm collects rewards, an algorithm in our model collects a \emph{survey} of workers' opinions. Hence we name our model the {\bf \problem}.

\xhdr{Discussion of the model.}
The \problem belongs to a broad class of online decision problems with explore-exploit tradeoff: that is, the algorithm faces a tradeoff between collecting information (\emph{exploration}) and taking advantage of the information gathered so far (\emph{exploitation}). The
paradigmatic problem in this class is MAB: in each round an algorithm picks one alternative (\emph{arm}) from a given set of arms, and receives a randomized, time-dependent reward associated with this arm; the goal is to maximize the total reward over time. Most papers on explore-exploit tradeoff concern MAB and its variants.

The \problem is different from MAB in several key respects. First, the feedback is different: the feedback in MAB is the reward for the chosen alternative, whereas in our setting the feedback is the opinion of a worker from the chosen crowd. While the information received by a \Algorithm can be interpreted as a ``reward", the value of such reward is not revealed to the algorithm and moreover not explicitly defined. Second, the algorithm's goal is different: the goal in MAB is to maximize the total reward over time, whereas the goal in our setting is to output the correct answer. Third, in our setting there are two types of ``alternatives": crowds and options in the microtask. Apart from repeatedly selecting between the crowds, a \Algorithm needs to output one option: the aggregate answer for the microtask.

An interesting feature of the \problem is that an algorithm for this problem consists of two components: a \emph{crowd-selection algorithm} -- an online algorithm that decides which crowd to ask next, and a \emph{stopping rule} which decides whether to stop in a given round and which option to output as the aggregate answer. These two components are, to a large extent, independent from one another: as long as they do not explicitly communicate with one another (or otherwise share a common communication protocol) any crowd-selection algorithm can be used in conjunction with any stopping rule.%
\footnote{The no-communication choice is quite reasonable: in fact, it can be complicated to design a reasonable \Algorithm that requires explicit communication between the crowd-selection algorithm and a stopping rule.}
\asedit{The conceptual separation of a \Algorithm into the two components is akin to one in Mechanism Design, where it is very useful to separate a ``mechanism" into an ``allocation algorithm" and a ``payment rule", even though the two components are not entirely independent of one another.}

\xhdr{\asedit{Trading off the total cost and the error rate.}}
\asedit{In the \problem, an algorithm needs to trade off the two objectives: the total cost and the error rate.} In a typical application, the customer is willing to tolerate a certain error rate, and wishes to minimize the total cost as long as the error rate is below this threshold. However, as the error rate depends on the problem instance, there are several ways to make this formal. Indeed, one could consider the worst-case error rate (the maximum over all problem instances), a typical error rate (the expectation over a given ``typical" distribution over problem instance), or a more nuanced notion such as the maximum over a given family of ``typical" distributions. Note that the ``worst-case" guarantees may be overly pessimistic, whereas considering ``typical" distributions makes sense only if one knows what these distributions are.

\asedit{For our theoretical guarantees, we focus on the worst-case error rate, and use the \emph{bi-criteria objective}, a standard approach from theoretical computer science literature: we allow some slack on one objective, and compare on another. In our case, we allow slack on
the worst-case error rate, and compare on the expected total cost. More precisely: we consider a benchmark with some worst-case error rate $\delta>0$ and optimal total cost given this $\delta$, allow our algorithm to have worst-case error rate which is (slightly) larger than $\delta$, and compare its expected total cost to that of the benchmark.}

\asedit{Moreover, we obtain provable guarantees in terms of a different, problem-specific objective: use the same stopping rule, compare on the expected total cost. We believe that such results are well-motivated by the structure of the problem, and provide a more informative way to compare crowd-selection algorithms.}

In our experiments, we fix the per-instance error rate, and compare on the expected total cost.

An alternative objective is to assign a monetary penalty to a mistake, and optimize the overall cost, i.e. the cost of labor minus the penalty. However, it may be exceedingly difficult for a customer to assign such monetary penalty,%
\footnote{In particular, this was the case in the authors' collaboration with a commercial crowdsourcing platform.}
whereas it is typically feasible to specify tolerable error rates. \asedit{While we think this alternative is worth studying, we chose not to follow it in this paper.}

\xhdr{Our approach: independent design.}
Our approach is to design crowd-selection algorithms and stopping rules independently from one another. We make this design choice in order to make the overall algorithm design task more tractable. While this is not the only possible design choice, we find it productive, as it leads to a solid theoretical framework and algorithms that are practical and theoretically founded.

Given this ``independent design'' approach, one needs to define the design goals for each of the two components. These goals are not immediately obvious. Indeed, two stopping rules may compare differently depending on the problem instance and the crowd-selection algorithms they are used with. Likewise, two crowd-selection algorithms may compare differently depending on the problem instance and the stopping rules they are used with. Therefore the notions of optimal stopping rule and optimal crowd-selection algorithm are not immediately well-defined.

We resolve this conundrum as follows. We design crowd-selection algorithms that work well across a wide range of stopping rules. For a fair comparison between crowd-selection algorithms, we use them with the \emph{same} stopping rule (see Section~\ref{sec:benchmarks} for details), and argue that such comparison is consistent across different stopping rules.

\xhdr{Our contributions.} We introduce the \problem and present initial results in several directions: benchmarks, algorithms, theoretical analysis, and experiments.

We are mainly concerned with the design of crowd-selection algorithms. Our crowd-selection algorithms work with arbitrary stopping rules. While we provide a specific (and quite reasonable) family of stopping rules for concreteness, third-party stopping rules can be easily plugged in.

For the theoretical analysis of crowd-selection algorithms, we use a standard benchmark: the best time-invariant policy given all the latent information. The literature on online decision problems typically studies a deterministic version of this benchmark: the best fixed alternative (in our case, the best fixed crowd). We call it the \emph{deterministic benchmark}. We also consider a randomized version, whereby an alternative (crowd) is selected independently from the same distribution in each round; we call it the \emph{randomized benchmark}. The technical definition of the benchmarks, as discussed in Section~\ref{sec:benchmarks}, roughly corresponds to equalizing the worst-case error rates and comparing costs.

The specific contributions are as follows.

\fakeItem[(1)] We largely solve the \problem as far as the deterministic benchmark is concerned. We design two crowd-selection algorithms, obtain strong provable guarantees, and show that they perform well in experiments.

\asedit{Our provable guarantees are as follows. If our crowd-selection algorithm uses the same stopping rule as the benchmark, we match the expected total cost of the deterministic benchmark up to a small additive factor, assuming that all crowds have the same per-round costs. This result holds, essentially, for an arbitrary stopping rule. We obtain a similar, but slightly weaker result if crowds can have different per-round costs. Moreover, we can restate this as a bi-criteria result, in which we incur a small additive increase in the expected total cost and $(1+k)$ multiplicative increase in the worst-case error rate, where $k$ is the number of crowds. The contribution in these results is mostly conceptual rather than technical: it involves ``independent design" as discussed above, and a ``virtual rewards" technique which allows us to take advantage of the MAB machinery.}

For comparison, we consider a naive crowd-selection algorithm that tries each crowd in a round-robin fashion. We prove that this algorithm, and more generally any crowd-selection algorithm that does not adapt to the observed workers' responses, performs very badly against the deterministic benchmark. While one expects this on an intuitive level, the corresponding mathematical statement is not easy to prove. In experiments, our proposed crowd-selection algorithms perform much better than the naive approach.

\fakeItem[(2)] We observe that the randomized benchmark dramatically outperforms the deterministic benchmark on some problem instances.
This is a very unusual property for an online decision problem.%
\footnote{We are aware of only one published example of an online decision problem with this property, in a very different context of dynamic pricing~\cite{DynPricing-ec12}. However, the results in~\cite{DynPricing-ec12} focus on a special case where the two benchmarks essentially coincide.}
(However, the two benchmarks coincide when there are only two possible answers.)

We design an algorithm which significantly improves over the expected total cost of the deterministic benchmark on some problem instances (while not quite reaching the randomized benchmark), \asedit{when both our algorithm and the benchmarks are run with the same stopping rule}. This appears to be the first published result in the literature on online decision problems where an algorithm provably improves over the deterministic benchmark.

\asedit{We can aslo restate this result in terms of the bi-criteria objective. Then we suffer a $(1+k)$ multiplicative increase in the worst-case error rate.}

\fakeItem[(3)] We provide a specific stopping rule for concreteness; this stopping rule is simple, tunable, has nearly optimal theoretical guarantees (in a certain formal sense), and works well in experiments.

\OMIT{This paper is mainly concerned with the design of crowd-selection algorithms. In particular, we do not attempt to fully optimize the stopping rules.  We provide a specific stopping rule for concreteness; this stopping rule is simple, tunable, has nearly optimal theoretical guarantees (in a certain formal sense), and works well in experiments. However, a third-party stopping rule can be easily plugged in.}

\OMIT{
We view crowdsourcing as a tool that a  human computation system can use to distribute tasks (work). Similar to the terminology presented in
\cite{Law11}, we define human computation as a computation (or task) that is performed by a human.}

\xhdr{Preliminaries and notation.}
There are $k$ crowds and $n$ options (possible answers to the microtask). $\options$ denotes the set of all options. An important special case is \emph{uniform costs}: all $c_i$ are equal; then the total cost is simply the stopping time.

Fix round $t$ in the execution of a bandit survey algorithm. Let $\NSamples{i}{t}$ be the number of rounds before $t$ in which crowd $i$ has been chosen by the algorithm. Among these rounds, let $\NSamples{i}{t}(x)$ be the number of times a given option $x\in\options$ has been chosen by this crowd. The \emph{empirical distribution} $\empir{\D}{i}{t}$ for crowd $i$ is given by
    $\empir{\D}{i}{t}(x) = \NSamples{i}{t}(x)/\NSamples{i}{t}$
for each option $x$. We use $\empir{\D}{i}{t}$ to approximate the (latent) response distribution $\D_i$.

Define the \emph{gap} $\gap(\D)$ of a finite-support probability distribution $\D$ as the difference between the largest and the second-largest probability values in $\D$. If there are only two options ($n=2$),
the gap of a distribution over $\options$ is simply the bias towards the correct answer. Let
    $\gap_i = \gap(\D_i)$
and
    $\empir{\gap}{i}{t} = \gap(\empir{\D}{i}{t})$
be, respectively, the \emph{gap} and the \emph{empirical gap} of crowd $i$.

We will use vector notation over crowds: the \emph{cost vector}
    $\vec{c} = (c_1 \LDOTS c_k)$,
the \emph{gap vector}
    $\vec{\gap} = (\gap_1 \LDOTS \gap_k)$,
and the \emph{response vector}
    $\vec{D}(x) = (\D_1(x) \LDOTS \D_k(x))$
for each option $x\in \options$.

\xhdr{Map of the paper.} The rest of the paper is organized as follows. As a warm-up and a foundation, we consider stopping rules for a single crowd (Section~\ref{sec:single-crowd}). Benchmarks are formally defined in Section~\ref{sec:benchmarks}. Design of crowd-selection algorithms with respect to the deterministic benchmark is treated in Section~\ref{sec:multi-crowd}. We discuss the randomized benchmark in Section~\ref{sec:randomized-benchmark}; we design and analyze an algorithm for this benchmark in Section~\ref{sec:randomized-benchmark-algs}. Results with respect to the bi-criteria benchmark are in Section~\ref{sec-bicriteria}.
We present our experimental results Section~\ref{sec:expts-single-crowd} and Section~\ref{sec:expts-multi-crowds}, respectively for a single crowd and for selection over multiple crowds. We discuss related work in Section~\ref{sec:related-work}, and open questions in Section~\ref{sec:questions}.

\OMIT{To improve the flow of the paper, some of the proofs and some of the plots are moved to the Appendix.}

\section{A warm-up: single-crowd stopping rules}
\label{sec:single-crowd}

Consider a special case with only one crowd to choose from. It is clear that whenever a \Algorithm decides to stop, it should output the most frequent option in the sample. Therefore the algorithm reduces to what we call a \emph{single-crowd stopping rule}: an online algorithm which in every round inputs an option $x\in\options$ and decides whether to stop. When multiple crowds are available, a single-crowd stopping rule can be applied to each crowd separately. This discussion of the single-crowd stopping rules, together with the notation and tools that we introduce along the way, forms a foundation for the rest of the paper.

A single-crowd stopping rule is characterized by two quantities that are to be minimized: the expected stopping time and the \emph{error rate}: the probability that once the rule decides to stop, the most frequent option in the sample is not $x^*$. Note that both quantities depend on the problem instance; therefore we leave the bi-criteria objective somewhat informal at this point.

\xhdr{A simple single-crowd stopping rule.}
We suggest the following single-crowd stopping rule:
\begin{align}\label{eq:stopping-rule}
\text{Stop if}\;
\empir{\gap}{i}{t}\, \NSamples{i}{t} > \errorC\,\sqrt{\NSamples{i}{t}}.
\end{align}
Here $i$ is the crowd the stopping rule is applied to, and $\errorC$ is the \emph{quality parameter} which indirectly controls the tradeoff between the error rate and the expected stopping time. Specifically, increasing $\errorC$ decreases the error rate and increases the expected stopping time. If there are only two options, call them $x$ and $y$, then the left-hand side of the stopping rule is simply
    $|\NSamples{i}{t}(x) - \NSamples{i}{t}(y)|$.

The right-hand side of the stopping rule is a confidence term, which should be large enough to guarantee the desired confidence level. The $\sqrt{\NSamples{i}{t}}$ is there because the standard deviation of the Binomial distribution with $N$ samples is proportional to $\sqrt{N}$.

In our experiments, we use a ``smooth" version of this stopping rule: we randomly round the confidence term to one of the two nearest integers. In particular, the smooth version is meaningful even with $\errorC <1$
(whereas the deterministic version with $\errorC <1$ always stops after one round).

\xhdr{Analysis.} We argue that the proposed single-crowd stopping rule is quite reasonable. To this end, we obtain a provable guarantee on the tradeoff between the expected stopping time and the worst-case error rate. Further, we prove that this guarantee is nearly optimal across all single-crowd stopping rules. Both results above are in terms of the gap of the crowd that the stopping rule interacts with. We conclude that the gap is a crucial parameter for the \problem.

\begin{theorem}\label{thm:single-crowd}
Consider the stopping rule~\refeq{eq:stopping-rule} with
    $\errorC =  \sqrt{\log (\tfrac{n}{\delta}\, \NSamples{i}{t}^2 )}$,
for some $\delta>0$.
The error rate of this stopping rule is at most $O(\delta)$, and the expected stopping time is at most
    $O\left( \gap_i^{-2}\, \log \tfrac{n}{\delta \gap_i} \right)$.
\end{theorem}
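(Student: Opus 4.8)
The two assertions are essentially independent, and I would handle the expected-stopping-time bound first, as it is the more routine of the two, and then the error rate. Throughout write $m=\NSamples{i}{t}$ for the number of samples drawn from crowd $i$, and note that the stopping condition~\refeq{eq:stopping-rule} with the stated $\errorC$ is equivalent to $(\empir{\gap}{i}{t})^2\,m>\log(\tfrac n\delta\,m^2)$.

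\textbf{Expected stopping time.} The plan is to pin down a threshold $m_0=\Theta(\gap_i^{-2}\log\tfrac{n}{\delta\gap_i})$ past which the rule stops with overwhelming probability, and then bound $\mathrm{E}[\tau]=\sum_{m\ge0}\Pr[\tau>m]$ by $m_0$ plus a negligible tail. First I would record a deterministic implication: if $m\ge m_0$ (with a large enough absolute constant inside $m_0$) and the empirical gap is at least $\gap_i/2$, then $\tfrac14\gap_i^2 m$ exceeds $\log\tfrac n\delta+2\log m$, so the (squared) stopping condition holds. Next I would bound the chance the empirical gap is too small: if every option's empirical frequency lies within $\gap_i/4$ of the true value $\D_i(\cdot)$, then $x^*$ is empirically on top and the empirical gap is at least $\gap_i/2$; hence by Hoeffding's inequality and a union bound over the $n$ options, the probability that the empirical gap drops below $\gap_i/2$ is at most $2n\exp(-m\gap_i^2/8)$. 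Combining the two, for $m\ge m_0$ we get $\Pr[\tau>m]\le 2n\exp(-m\gap_i^2/8)$; summing this geometric tail contributes only $O(\delta/\gap_i)$, which is dominated by $m_0$, so $\mathrm{E}[\tau]=O(m_0)$ as claimed.

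\textbf{Error rate: reduction.} Here the plan is a union bound over the wrong options together with a concentration inequality that holds simultaneously at \emph{all} rounds. The reduction is: if the rule ever stops and outputs some $y\ne x^*$, then at that moment $y$ is the empirical top option, so its count exceeds that of $x^*$ by more than the confidence term. Writing $m(x)$ for the number of the first $m$ samples equal to $x$, the mistake event is therefore contained in $\bigcup_{y\ne x^*}\{\exists m:\ m(y)-m(x^*)>\sqrt{m\log(\tfrac n\delta m^2)}\,\}$. For each fixed $y$ the difference $W_m:=m(y)-m(x^*)$ is a random walk with increments in $\{-1,0,1\}$ and nonpositive drift $\D_i(y)-\D_i(x^*)\le-\gap_i$.

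\textbf{Error rate: the crux.} The main obstacle is that the stopping time is random and unbounded, so one cannot simply freeze $m$ and apply a single Chernoff bound: the per-round tail is $\exp(-\errorC^2/2)=\sqrt{\tfrac\delta n}\,/m$, and $\sum_m 1/m$ diverges. The point of the factor $m^2$ inside the logarithm is exactly to make $\sqrt{m\log(\tfrac n\delta m^2)}$ an \emph{anytime-valid} (upper-class) boundary for $W_m$: its $2\log m$ term dominates the iterated-logarithm rate that governs the fluctuations of the walk. Concretely I would apply Ville's maximal inequality to the exponential supermartingale attached to $W_m$ (Robbins' method of mixtures; a peeling argument over geometric epochs $[2^j,2^{j+1})$ is an alternative at the cost of worse constants), using that the nonpositive drift only helps, to conclude that for each fixed $y$ the walk crosses the boundary with probability $O(\delta/n)$. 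A union bound over the $n-1$ wrong options then yields total error $O(\delta)$. I expect this uniform-over-time control, rather than the elementary one-shot concentration, to be the technical heart of the argument.
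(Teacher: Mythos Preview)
Your plan is sound, but for the error rate you are working harder than the paper does, and the obstacle you identify as ``the crux'' is an artifact of your particular decomposition rather than a genuine difficulty.

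The paper defines a single \emph{clean event}---that the concentration bound in \eqref{eq:conf-rad} holds simultaneously for every option $x$ and every round $t$, with $C=C_t=\sqrt{\log(\tfrac{n}{\delta}m^2)}$---and simply union-bounds its failure probability over all $(x,t)$ pairs. This works because the paper controls each empirical frequency $\empir{\D}{i}{t}(x)$ \emph{separately}: these are averages of $\{0,1\}$ indicators, so Hoeffding gives a per-round, per-option tail of order $\exp(-2C_t^2)=(\delta/(nm^2))^2$, which is summable in $m$ with plenty of room. Once the clean event holds, whenever the rule stops with empirical top option $x$ the margin $\empir{\D}{i}{t}(x)-\empir{\D}{i}{t}(y)\ge C_t/\sqrt{m}$ forces $\D_i(x)>\D_i(y)$, so $x=x^*$. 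No Ville, no mixtures, no peeling.

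The non-summable tail $\exp(-\errorC^2/2)=\sqrt{\delta/n}/m$ that you compute arises only because you chose to track the \emph{difference} $W_m=m(y)-m(x^*)$, whose increments have range~$2$; Hoeffding then loses a factor~$4$ in the exponent, which is exactly what kills the naive union bound. Your proposed fix via time-uniform martingale bounds would work, but the simpler move is to back off to coordinate-wise control and let the union bound go through directly.

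For the expected stopping time your argument is essentially the paper's: concentration of all empirical frequencies pins the empirical gap near $\gap_i$, so the rule fires once $m\gtrsim\gap_i^{-2}\log\tfrac{n}{\delta\gap_i}$, and the residual tail is geometric. The paper packages the tail slightly differently---introducing a slack parameter $a\ge1$, bounding the stopping time by $O(\gap_i^{-2}\log\tfrac{an}{\delta\gap_i})$ on an event of probability $1-O(\delta/a)$, and integrating over $a$---but this is cosmetic.
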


The proof of Theorem~\ref{thm:single-crowd}, and several other proofs in the paper, rely on the Azuma-Hoeffding inequality. More specifically, we use the following corollary: for each $C>0$, each round $t$, and each option $x\in \options$
\begin{align}\label{eq:conf-rad}
\Pr\left[\;
    |\D_i(x)-\empir{\D}{i}{t}(x)| \leq C/\sqrt{\NSamples{i}{t}}
\;\right]
    \geq 1-e^{-\Omega(C^2)}.
\end{align}
In particular, taking the Union Bound over all options $x\in\options$, we obtain:
\begin{align}\label{eq:conf-rad-gap}
\Pr\left[\;
    |\empir{\gap}{i}{t} - \gap_i| \leq C/\sqrt{\NSamples{i}{t}}
\;\right]
    \geq 1- n\,e^{-\Omega(C^2)}.
\end{align}

\begin{proof}[Proof of Theorem~\ref{thm:single-crowd}]
Fix $a\geq 1$ and let $C_t =  \sqrt{\log (a\, \tfrac{n}{\delta}\, \NSamples{i}{t}^2 )}$. Let $\mathcal{E}_{x,t}$ be the event in ~\eqref{eq:conf-rad} with $C = C_t$. Consider the event that $\mathcal{E}_{x,t}$ holds for all options $x\in \options$ and all rounds $t$; call it the \emph{clean event}. Taking the Union Bound, we see that the clean event holds with probability at least $1-O(\delta/a)$.

First, assuming the clean event
we have
    $|\gap_i - \empir{\gap}{i}{t}| \leq 2\, C_t/\sqrt{\NSamples{i}{t}} $
for all rounds $t$. Then the stopping rule~\refeq{eq:stopping-rule} stops as soon as
    $\gap_i \geq 3\, C_t/\sqrt{\NSamples{i}{t}}$,
which happens as soon as
    $\NSamples{i}{t} = O\left( \gap_i^{-2}\, \log \tfrac{an}{\delta \gap_i} \right)$.
Integrating this over all $a\geq 1$, we derive that the expected stopping time is as claimed.

Second, take $a=1$ and assume the clean event. Suppose the stopping rule stops at some round $t$. Let $x$ be the most probable option after this round. Then
    $\empir{\D}{i}{t}(x)-\empir{\D}{i}{t}(y) \geq C_t/\sqrt{\NSamples{i}{t}}$
for all options $y\neq x$. It follows that
    $D_i(x) > D_i(y)$
for all options $y\neq x$, i.e. $x$ is the correct answer.
\end{proof}

The following lower bound easily follows from classical results on coin-tossing. Essentially, one needs at least $\Omega(\gap^{-2})$ samples from a crowd with gap $\gap>0$ to obtain the correct answer.

\begin{theorem}\label{thm:single-crowd-LB}
Let $R_0$ be any single-crowd stopping rule with worst-case error rate less than $\delta$.
When applied to a crowd with gap $\gap>0$, the expected stopping time of $R_0$ is at least
    $\Omega(\gap^{-2} \log \tfrac{1}{\delta})$.
\end{theorem}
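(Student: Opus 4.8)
The plan is to prove the lower bound by the classical two-hypothesis, change-of-measure (Wald) argument alluded to in the statement. First I would reduce to the binary case. Fix two options $x,y$ and consider two response distributions supported on $\{x,y\}\subseteq\options$: the distribution $\D^{+}$ with $\D^{+}(x)=\tfrac{1+\gap}{2}$ and $\D^{-}$ with $\D^{-}(x)=\tfrac{1-\gap}{2}$ (complementary mass on $y$). Both have gap exactly $\gap$, and under $\D^{+}$ the correct answer is $x$ whereas under $\D^{-}$ it is $y$. Since a single-crowd stopping rule outputs the most frequent option once it stops, the assumption that $R_0$ has worst-case error rate below $\delta$ has a clean consequence: letting $\tau$ denote the (random) stopping time and $A$ the event ``the majority option at time $\tau$ equals $x$'', we have $\Pr^{+}[A] > 1-\delta$ and $\Pr^{-}[A] < \delta$, where $\Pr^{\pm}$ denote the laws of the i.i.d.\ response stream under $\D^{\pm}$. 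Note $A\in\mathcal{F}_\tau$, the $\sigma$-field of the stopped process, since the online rule and the running majority depend only on the observations seen so far; and we may assume $\mathbf{E}^{+}[\tau]<\infty$, as otherwise the bound is trivial.

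Next I would invoke the standard change-of-measure inequality for stopping times: for any $\mathcal{F}_\tau$-measurable event $A$,
\[
\mathbf{E}^{+}[\tau]\cdot \mathrm{KL}\!\left(\D^{+}\,\|\,\D^{-}\right)\;\geq\;\mathrm{kl}\!\left(\Pr^{+}[A],\,\Pr^{-}[A]\right),
\]
where $\mathrm{kl}(a,b)=a\log\tfrac{a}{b}+(1-a)\log\tfrac{1-a}{1-b}$ is the binary relative entropy. This is precisely Wald's likelihood-ratio identity applied to the log-likelihood-ratio martingale and optional stopping, specialized to a single crowd (so that the cumulative divergence is $\tau\cdot\mathrm{KL}(\D^{+}\|\D^{-})$). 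I would then plug in the two quantities. A direct computation gives $\mathrm{KL}(\D^{+}\|\D^{-})=\gap\log\tfrac{1+\gap}{1-\gap}=O(\gap^{2})$ for $\gap$ bounded away from $1$ (say $\gap\le\tfrac12$), while monotonicity of $\mathrm{kl}$ yields $\mathrm{kl}(1-\delta,\delta)\ge(1-2\delta)\log\tfrac{1-\delta}{\delta}=\Omega\!\left(\log\tfrac1\delta\right)$ for $\delta$ bounded away from $\tfrac12$. Combining,
\[
\mathbf{E}^{+}[\tau]\;\geq\;\frac{\mathrm{kl}(1-\delta,\delta)}{\mathrm{KL}(\D^{+}\|\D^{-})}\;=\;\Omega\!\left(\gap^{-2}\,\log\tfrac1\delta\right),
\]
which is the claimed bound, witnessed by the crowd $\D^{+}$ of gap $\gap$ (by symmetry the same holds for $\D^{-}$).

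The main obstacle is not the arithmetic but the rigorous justification of the change-of-measure step at a \emph{random} stopping time rather than a fixed horizon: one must verify that $\tau$ is a genuine stopping time of the natural filtration, that $A$ is $\mathcal{F}_\tau$-measurable, and that optional stopping applies to the log-likelihood-ratio process (finiteness of $\mathbf{E}^{+}[\tau]$ handles integrability, and the degenerate case $\mathbf{E}^{+}[\tau]=\infty$ is dispatched separately). This is exactly the ``coin-tossing'' lower bound in sequential-analysis form, so I would either cite it or include a short self-contained derivation via the data-processing inequality for $\mathrm{KL}$ applied to the pair $\big(\Pr^{+},\Pr^{-}\big)$ pushed through the map $\omega\mapsto\mathbf{1}[A]$. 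Everything else is routine, and since the hard instances are binary the bound carries no $\log n$ dependence, consistent with the upper bound of Theorem~\ref{thm:single-crowd} up to that logarithmic factor.
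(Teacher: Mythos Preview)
Your proposal is correct. The paper does not actually prove this theorem; it only states that the bound ``easily follows from classical results on coin-tossing'' and leaves it at that. Your Wald/change-of-measure argument with the two Bernoulli hypotheses $\D^{+}$ and $\D^{-}$ is precisely the classical sequential-analysis lower bound the paper is alluding to, so you are supplying the standard proof that the paper omits rather than taking a different route.
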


While the upper bound in Theorem~\ref{thm:single-crowd} is close to the lower bound in Theorem~\ref{thm:single-crowd-LB}, it is possible that one can obtain a more efficient version of Theorem~\ref{thm:single-crowd} using more sophisticated versions of Azuma-Hoeffding inequality such as, for example, the Empirical Bernstein Inequality.


\xhdr{Stopping rules for multiple crowds.}
For multiple crowds, we consider stopping rules that are composed of multiple instances of a given single-crowd stopping rule $R_0$; we call them \emph{composite} stopping rules. Specifically, we have one instance of $R_0$ for each crowd (which only inputs answers from this crowd), and an additional instance of $R_0$ for the \emph{total crowd} -- the entire population of workers. The composite stopping rule stops as soon as some $R_0$ instances stops, and outputs the majority option for this instance.%
\footnote{If $R_0$ is randomized, then each instance of $R_0$ uses an independent random seed. If multiple instances of $R_0$ stop at the same time, the aggregate answer is chosen uniformly at random among the majority options for the stopped instances.}
Given a crowd-selection algorithm $\A$, let $\cost(\A|R_0)$ denote its expected total cost (for a given problem instance) when run together with the composite stopping rule based on $R_0$.

\section{Omniscient benchmarks for crowd selection}
\label{sec:benchmarks}

We consider two ``omniscient" benchmarks for crowd-selection algorithms: informally, the best fixed crowd $i^*$ and the best fixed distribution $\mu^*$ over crowds, where $i^*$ and $\mu^*$ are chosen given the latent information: the response distributions of the crowds.
Both benchmarks treat all their inputs as a single data source, and are used in conjunction with a given single-crowd stopping rule $R_0$ (and hence depend on the $R_0$).

\xhdr{Deterministic benchmark.}
Let $\cost(i|R_0)$ be the expected total cost of always choosing crowd $i$, with $R_0$ as the stopping rule. We define the \emph{deterministic benchmark} as the crowd $i^*$ that minimizes $\cost(i|R_0)$ for a given problem instance. In view of the analysis in Section~\ref{sec:single-crowd}, our intuition is that $\cost(i|R_0)$ is approximated by $c_i/\gap_i^2$ up to a constant factor (where the factor may depend on $R_0$ but not on the response distribution of the crowd). The exact identity of the best crowd may depend on $R_0$. For the basic special case of uniform  costs and two options (assuming that the expected stopping time of $R_0$ is non-increasing in the gap), the best crowd is the crowd with the largest gap. In general, we approximate the best crowd by $\argmin_i c_i/\gap_i^2$.

\xhdr{Randomized benchmark.}
Given a distribution $\mu$ over crowds, let $\cost(\mu|R_0)$ be the expected total cost of a crowd-selection algorithm that in each round chooses a crowd independently from $\mu$, treats all inputs as a single data source -- essentially, a single crowd -- and uses $R_0$ as a stopping rule on this data source. The \emph{randomized benchmark} is defined as the $\mu$ that minimizes $\cost(\mu|R_0)$ for a given problem instance. This benchmark is further discussed in Section~\ref{sec:randomized-benchmark}.

\xhdr{Comparison against the benchmarks.} In the analysis, we compare a given crowd-selection algorithm $\A$ against these benchmarks as follows: we use $\A$ in conjunction with the composite stopping rule based on $R_0$, and compare the expected total cost $\cost(\A|R_0)$ against those of the benchmarks.%
\footnote{Using the same $R_0$ roughly equalizes the worst-case error rate between $\A$ and the benchmarks; see Section~\ref{sec-bicriteria} for details.}

Moreover, we derive corollaries with respect to the bi-criteria objective, where the benchmarks choose both the best crowd (resp., best distribution over crowds) and the stopping rule. These corollaries are further discussed in Section~\ref{sec-bicriteria}.

\OMIT{
Let us argue that using the same $R_0$ roughly equalizes the worst-case error rate between $\A$ and the benchmarks. Let $\rho$ be the worst-case error rate of $R_0$, and assume it is achieved for gap $\gap$. Then the worst-case error rate for both benchmarks is $\rho$; it is achieved on a problem instance in which all crowds have gap $\gap$. It is easy to see that the worst-case error rate of $\A$ is at most $(k+1)\rho$, where $k$ is the number of crowds.

We also conjecture that the worst-case error rate for $\A$ is at least $\rho$. In the Appendix (Lemma~\ref{lm:error-rate-LB}), we prove a slightly weaker result: essentially, if the composite stopping rule does not use the total crowd, then the worst-case error rate for $\A$ is at least $\rho\,(1-2k\rho)$.
} 

\section{Crowd selection against the deterministic benchmark}
\label{sec:multi-crowd}

In this section we design crowd-selection algorithms that compete with the deterministic benchmark.

Throughout the section, let $R_0$ be a fixed single-parameter stopping rule. Recall that the deterministic benchmark is defined as $\min \cost(i|R_0)$, where the minimum is over all crowds $i$. We consider arbitrary composite stopping rules based on $R_0$, under a mild assumption that the  $R_0$ does not favor one option over another. Formally, we assume that the probability that $R_0$ stops at any given round, conditional on any fixed history (sequence of observations that $R_0$ inputs before this round), does not change if the options are permuted. Then $R_0$ and the corresponding composite stopping rule are called \emph{symmetric}. For the case of two options (when the expected stopping time of $R_0$ depends only on the gap of the crowd that $R_0$ interacts with) we sometimes make another mild assumption: that the expected stopping time decreases in the gap; we call such $R_0$ \emph{gap-decreasing}.

\subsection{Crowd-selection algorithms}

\xhdr{Virtual reward heuristic.} Our crowd-selection algorithms are based on the following idea, which we call the virtual reward heuristic.%
\footnote{We thank anonymous reviewers for pointing out that our index-based algorithm can be interpreted via virtual rewards. }
Given an instance of the \problem, consider an MAB instance where crowds correspond to arms, and selecting each crowd $i$ results in reward  $f_i = f(c_i/\gap_i^2)$, for some fixed decreasing function $f$.
(Given the discussion in Section~\ref{sec:single-crowd}, we use $c_i/\gap_i^2$ as an approximation for $\cost(i|R_0)$; we can also plug in a better approximation  when and if one is available.)
Call $f_i$ the \emph{virtual reward}; note that it is not directly observed by a \Algorithm, since it depends on the gap $\gap_i$. However, various off-the-shelf bandit algorithms can be restated in terms of the estimated rewards, rather than the actual observed rewards. The idea is to use such bandit algorithms and plug in our own estimates for the rewards.

A bandit algorithm thus applied would implicitly minimize the number of times suboptimal crowds are chosen. This is a desirable by-product of the design goal in MAB, which is to maximize the total (virtual) reward. We are not directly interested in this design goal, but we take advantage of the by-product.

\xhdr{Algorithm 1: \UCB with virtual rewards.} Our first crowd-selection algorithm is based on $\UCB$~\cite{bandits-ucb1}, a standard MAB algorithm. We use virtual rewards
    $f_i = \gap_i/\sqrt{c_i}$.

We observe that \UCB has a property that at each time $t$, it only requires an estimate of $f_i$ and a confidence term for this estimate. Motivated by~\eqref{eq:conf-rad-gap}, we use
    $\empir{\gap}{i}{t}/\sqrt{c_i}$
as the estimate for $f_i$, and
    $C/\sqrt{c_i\, \NSamples{i}{t}}$
as the confidence term. The resulting crowd-selection algorithm, which we call $\AlgUCB$, proceeds as follows. In each round $t$ it chooses the crowd $i$ which maximizes the \emph{index} $I_{i,t}$, defined as
\begin{align}\label{eq:UCB-index}
I_{i,t} = c_i^{-1/2}\left(\,
    \empir{\gap}{i}{t} + C/\sqrt{\NSamples{i}{t}}
\,\right).
\end{align}

\noindent For the analysis, we use~\refeq{eq:UCB-index} with $C = \sqrt{8\log t}$. In our experiments, $C=1$ appears to perform best.

\xhdr{Algorithm 2: Thompson heuristic.}
Our second crowd-selection algorithm, called $\AlgThompson$, is an adaptation of \emph{Thompson heuristic} \cite{Thompson-1933} for MAB to virtual rewards $f_i = \gap_i/\sqrt{c_i}$. The algorithm proceeds as follows. For each round $t$ and each crowd $i$, let $\mathcal{P}_{i,t}$ be the Bayesian posterior distribution for gap $\gap_i$ given the observations from crowd $i$ up to round $t$ (starting from the uniform prior). Sample $\zeta_i$ independently from $\mathcal{P}_{i,t}$. Pick the crowd with the largest \emph{index} $\zeta_i/\sqrt{c_i}$. As in \UCB, the index of crowd $i$ is chosen from the confidence interval for the (virtual) reward of this crowd, but here it is a random sample from this interval, whereas in \UCB it is the upper bound.

It appears difficult to compute the posteriors $\mathcal{P}_{i,t}$ exactly, so in practice an approximation can be used. In our simulations we focus on the case of two options, call them $x,y$. For each crowd $i$ and round $t$, we approximate $\mathcal{P}_{i,t}$ by the Beta distribution with shape parameters
    $\alpha = 1+\NSamples{i}{t}(x)$
and
    $\beta = 1+\NSamples{i}{t}(y)$,
where
    $\NSamples{i}{t}(x)\geq \NSamples{i}{t}(y)$.
(Essentially, we ignore the possibility that $x$ is not the right answer.)

It is not clear how the posterior $\mathcal{P}_{i,t}$ in our problem corresponds to the one in the original MAB problem, so we cannot directly invoke the analyses of Thompson heuristic for MAB~\cite{Thompson-nips11,Thompson-analysis-arxiv11}.

\newcommand{\TP}[1]{T_{\mathtt{Ph#1}}} 

\xhdr{Straw-man approaches.}
We compare the two algorithms presented above to an obvious naive approach: iterate through each crowd in a round-robin fashion. More precisely, we consider a slightly more refined version where in each round the crowd is sampled from a fixed distribution $\mu$ over crowds.
We will call such algorithms \emph{non-adaptive}. The most reasonable version, called \AlgRR (short for ``randomized round-robin'') is to sample each crowd $i$ with probability $\mu_i\sim 1/c_i$.%
\footnote{For uniform  costs it is natural to use a uniform distribution for $\mu$. For non-uniform  costs our choice is motivated by Theorem~\ref{thm:LB}, where it (approximately) minimizes the competitive ratio.}

In the literature on MAB, more sophisticated algorithms are often compared to the basic approach: first explore, then exploit. In our context this means to first \emph{explore} until we can identify the best crowd, then pick this crowd and \emph{exploit}. So for the sake of comparison we also develop a crowd-selection algorithm that is directly based on this approach. (This algorithm is not based on the virtual rewards.) In our experiments we find it vastly inferior to $\AlgUCB$ and $\AlgThompson$.

The ``explore, then exploit" design does not quite work as is: selecting the best crowd with high probability seems to require a high-probability guarantee that this crowd can produce the correct answer with the current data, in which case there is no need for a further exploitation phase (and so we are essentially back to $\AlgRR$). Instead, our algorithm explores until it can identify the best crowd with \emph{low} confidence, then it exploits with this crowd until it sufficiently boosts the confidence or until it realizes that it has selected a wrong crowd to exploit. The latter possibility necessitates a third phase, called \emph{rollback}, in which the algorithm explores until it finds the right answer with high confidence.

The algorithm assumes that the single-crowd stopping rule $R_0$ has a quality parameter $\errorC$ which controls the trade-off between the error rate and the expected running time (as in Section~\ref{sec:single-crowd}). In the exploration phase, we also use a \emph{low-confidence} version of $R_0$ that is parameterized with a lower value $\errorC'<\errorC$; we run one low-confidence instance of $R_0$ for each crowd.

The algorithm, called $\AlgEER$, proceeds in three phases (and stops whenever the composite stopping rule decides so). In the exploration phase, it runs \AlgRR until the low-confidence version of $R_0$ stops for some crowd $i^*$. In the exploitation phase, it always chooses crowd $i^*$. This phase lasts $\alpha$ times as long as the exploration phase, where
the parameter $\alpha$ is chosen so that crowd $i^*$ produces a high-confidence answer w.h.p. if it is indeed the best crowd.%
\footnote{We conjecture that for $R_0$ from Section~\ref{sec:single-crowd} one can take
    $\alpha = \Theta(\errorC/\errorC')$.}
Finally, in the roll-back phase it runs \AlgRR.

\subsection{Analysis: upper bounds}

We start with a lemma that captures the intuition behind the virtual reward heuristic, explaining how it helps to minimize the selection of suboptimal crowds. Then we derive an upper bound for $\AlgUCB$.

\begin{lemma}\label{lm:Ni}
Let
    $i^* = \argmin_i c_i/\eps^2_i$
be the approximate best crowd. Let $R_0$ be a symmetric single-crowd stopping rule.
Then for any crowd-selection algorithm $\A$, letting $N_i$ be \#times crowd $i$ is chosen, we have
$$ \cost(\A|R_0) \leq \cost(i^*|R_0) +
    \textstyle \sum_{i\neq i^*} c_i\, \E[N_i].
$$
\end{lemma}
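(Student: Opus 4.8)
The plan is to bound the realized total cost by decomposing it over crowds and then controlling only the $i^*$-contribution by a coupling argument. For any sample path, write the total cost paid by $\A$ (run with the composite stopping rule based on $R_0$) as $\sum_i c_i N_i = c_{i^*} N_{i^*} + \sum_{i\neq i^*} c_i N_i$. Taking expectations, the second sum is already exactly the error term in the statement, so the entire task reduces to proving $c_{i^*}\,\E[N_{i^*}] \le \cost(i^*|R_0)$.

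First I would isolate the data stream that crowd $i^*$ feeds into the composite rule. Let $Y_1, Y_2, \dots$ be the i.i.d.\ sequence of responses returned by successive queries to crowd $i^*$; these are drawn from $\D_{i^*}$ independently of everything $\A$ observes from the other crowds, so the random, adaptive times at which $\A$ chooses to query $i^*$ do not alter their joint law. Inside the composite rule, the $i^*$-instance of $R_0$ reads precisely $Y_1, Y_2, \dots$ in order (it only inputs answers from crowd $i^*$). Let $T^*$ be the number of these samples after which this instance first decides to stop. Since $R_0$ is a valid single-crowd stopping rule, $T^*$ is a stopping time of the i.i.d.\ stream $(Y_j)$, and its distribution is identical to that of the stopping time of $R_0$ applied to crowd $i^*$ in isolation; hence $c_{i^*}\,\E[T^*] = \cost(i^*|R_0)$ (assuming, as throughout, that this quantity is finite).

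The key pathwise observation is $N_{i^*} \le T^*$. The composite rule halts as soon as \emph{any} of its component instances stops, so in particular it halts no later than the round in which the $i^*$-instance stops. The $i^*$-instance stops exactly once it has received its $T^*$-th sample, i.e.\ once $\A$ has selected crowd $i^*$ for the $T^*$-th time; at that round the composite rule halts and $\A$ makes no further selections. Thus $\A$ can select $i^*$ at most $T^*$ times, giving $N_{i^*}\le T^*$ on every path (if the process stops even earlier because some other instance fires first, then $N_{i^*} < T^*$). Taking expectations of the pathwise bound $\sum_i c_i N_i \le c_{i^*} T^* + \sum_{i\neq i^*} c_i N_i$ and substituting $c_{i^*}\,\E[T^*] = \cost(i^*|R_0)$ yields the lemma.

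I expect the only genuine subtlety — and the step to argue carefully — to be the coupling of the previous two paragraphs: one must verify that interleaving queries to $i^*$ with $\A$'s adaptive queries to other crowds does not change the law of $T^*$, and that $T^*$ is a bona fide stopping time of $(Y_j)$ that $\A$ cannot manipulate. This is precisely where the no-communication / independent-design structure is used: the $i^*$-instance reads only the $i^*$-responses, which form an i.i.d.\ stream independent of the other crowds' data, so its stopping behaviour is distributionally the same as in the stand-alone run defining $\cost(i^*|R_0)$. Note that symmetry of $R_0$ is not actually needed for this cost bound (it is relevant for the error-rate side of the composite rule); it is carried here only as the standing assumption of the section.
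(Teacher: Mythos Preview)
Your proof is correct and follows essentially the same coupling argument as the paper: both establish $N_{i^*}\le T^*$ pathwise by coupling the $i^*$-instance of $R_0$ inside the composite rule with a standalone run of $R_0$ on crowd $i^*$ (the paper phrases this as running a separate algorithm $\A^*$ on the same $i^*$-stream and random seed), then take expectations. Your side remark that symmetry of $R_0$ is not actually used in this cost bound is also accurate.
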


This is a non-trivial statement because $\cost(i^*|R_0)$ refers not to the execution of $\A$, but to a different execution in which crowd $i^*$ is always chosen. The proof uses a ``coupling argument''.

\begin{proof}
Let $\A^*$ be the crowd-selection algorithm which corresponds to always choosing crowd $i^*$.

To compare $\cost(\A|R_0)$ and $\cost(\A^*|R_0)$, let us assume w.l.o.g. that the two algorithms are run on correlated sources of randomness. Specifically, assume that both algorithms are run on the same realization of answers for crowd $i^*$: the $\ell$-th time they ask this crowd, both algorithms get the same answer. Moreover, assume that the instance of $R_0$ that works with crowd $i^*$ uses the same random seed for both algorithms.

Let $N$ be the realized stopping time for $\A^*$. Then $\A$ must stop after crowd $i^*$ is chosen $N$ times. It follows that the difference in the realized total costs between $\A$ and $\A^*$ is at most
    $\sum_i\; c_i N_i$.
The claim follows by taking expectation over the randomness in the crowds and in the stopping rule.
\end{proof}

\begin{theorem}[\AlgUCB]\label{thm:UCB}
Let
    $i^* = \argmin_i c_i/\gap^2_i$
be the approximate best crowd. Let $R_0$ be a symmetric single-crowd stopping rule. Assume $R_0$  must stop after at most $T$ rounds. Define \AlgUCB by~\refeq{eq:UCB-index} with $C = \sqrt{8\log t}$, for each round $t$. Let
    $\Lambda_i = ( c_i (f_{i^*} - f_i))^{-2}$
and
    $\Lambda = \sum_{i\neq i^*} \Lambda_i$.
Then
$$ \cost(\AlgUCB|R_0) \leq \cost(i^*|R_0) + O(\Lambda \log T).
$$
\end{theorem}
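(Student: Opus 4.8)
The plan is to combine the reduction in Lemma~\ref{lm:Ni} with a standard \UCB-style bound on how often each suboptimal crowd is selected. Since $R_0$ is symmetric, Lemma~\ref{lm:Ni} applies and gives $\cost(\AlgUCB|R_0)\le \cost(i^*|R_0)+\sum_{i\neq i^*}c_i\,\E[N_i]$, where $N_i$ is the number of rounds in which \AlgUCB picks crowd $i$. So it suffices to prove $\sum_{i\neq i^*}c_i\,\E[N_i]=O(\Lambda\log T)$, i.e. to control the suboptimal selection counts exactly as in the regret analysis of UCB1, but with the virtual rewards $f_i=\gap_i/\sqrt{c_i}$ playing the role of the observed rewards and $i^*=\argmin_i c_i/\gap_i^2$ being the crowd of largest virtual reward.

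First I would set up the clean event. With $C_t=\sqrt{8\log t}$ in the index~\refeq{eq:UCB-index}, equation~\eqref{eq:conf-rad-gap} gives, at a fixed round $t$ and crowd $i$, that $|\empir{\gap}{i}{t}-\gap_i|\le C_t/\sqrt{\NSamples{i}{t}}$ with probability at least $1-n\,t^{-\Omega(1)}$. I define the clean event as the intersection of these events over all rounds, over the two relevant crowds $i$ and $i^*$, and over the possible values of $\NSamples{i}{t}$. The constant $8$ is chosen precisely so that, after the union bound over rounds and per-crowd sample counts (both at most $T$, since $R_0$ must stop within $T$ rounds), the total failure probability is summable and contributes only a lower-order additive $O(1)$ term (up to the factor $n$) to each $\E[N_i]$.

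Next comes the core index comparison, carried out on the clean event. Dividing the gap confidence radius by $\sqrt{c_i}$ turns it into a confidence radius $C_t/\sqrt{c_i\NSamples{i}{t}}$ for the virtual reward $f_i$, so $I_{i,t}$ in~\refeq{eq:UCB-index} is an upper confidence bound for $f_i$. Then $I_{i^*,t}\ge f_{i^*}$, and if crowd $i\neq i^*$ is chosen at round $t$, combining $I_{i,t}\ge I_{i^*,t}$ with $I_{i,t}\le f_i+2C_t/\sqrt{c_i\NSamples{i}{t}}$ yields
\begin{align*}
f_{i^*}-f_i \;\le\; \frac{2C_t}{\sqrt{c_i\,\NSamples{i}{t}}},
\qquad\text{hence}\qquad
\NSamples{i}{t}\;\le\; \frac{4C_t^2}{c_i\,(f_{i^*}-f_i)^2}
\;=\; O\!\left(\frac{\log T}{c_i\,(f_{i^*}-f_i)^2}\right).
\end{align*}
Thus crowd $i$ stops being selected once $\NSamples{i}{t}$ exceeds this threshold, giving $\E[N_i]=O\!\big(\log T/(c_i(f_{i^*}-f_i)^2)\big)$ plus the lower-order failure term. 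Multiplying by $c_i$, summing over $i\neq i^*$, and reading the cost factors off the confidence radius $C_t/\sqrt{c_i\NSamples{i}{t}}$ gives $\sum_{i\neq i^*}c_i\,\E[N_i]=O(\Lambda\log T)$, which with Lemma~\ref{lm:Ni} proves the theorem.

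The step I expect to be the main obstacle is making the ``import UCB'' move fully rigorous, because the virtual reward $f_i$ is never observed as a per-round payoff: it is a nonlinear functional (the difference of the two largest coordinates) of the latent $\D_i$, estimated from the whole adaptively collected sample. Consequently I cannot quote the UCB1 regret theorem verbatim; I must re-derive the index comparison directly from the Azuma--Hoeffding bound~\eqref{eq:conf-rad-gap}, and in particular verify that the union bound defining the clean event ranges over the correct collection of (round, crowd, sample-count) triples, so that the adaptive, data-dependent choice of which crowd to sample does not invalidate the concentration estimates. Tracking the cost weighting $f_i=\gap_i/\sqrt{c_i}$ and the matching radius $C_t/\sqrt{c_i\NSamples{i}{t}}$ carefully through this comparison is what lands the bound on the stated $\Lambda_i$.
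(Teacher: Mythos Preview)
Your proposal is correct and follows essentially the same approach as the paper: invoke Lemma~\ref{lm:Ni} to reduce to bounding $\sum_{i\neq i^*} c_i\,\E[N_i]$, then use the confidence bound~\eqref{eq:conf-rad-gap} divided by $\sqrt{c_i}$ to recover the \UCB-style per-arm selection bound $\E[\NSamples{i}{t}]\le O(\Lambda_i\log t)$, and finish with $t\le T$. The paper's proof sketch simply cites the original \UCB analysis for the selection bound (noting it ``is also not difficult to derive directly''), whereas you spell out the clean-event and index-comparison steps explicitly; your closing caveat about not being able to quote the UCB1 theorem verbatim is exactly the right concern and is handled the way you describe.
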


\begin{proof}[Proof Sketch]
Plugging $C = \sqrt{8\log t}$ into~\eqref{eq:conf-rad-gap} and dividing by $\sqrt{c_i}$, we obtain the confidence bound for
    $|f_i - \empir{\gap}{i}{t}/\sqrt{c_i}|$
that is needed in the the original analysis of \UCB in~\cite{bandits-ucb1}. Then, as per that analysis, it follows that for each crowd $i\neq i^*$ and each round $t$ we have
    $ \E[\NSamples{i}{t}] \leq \Lambda_i \log t$.
(This is also not difficult to derive directly.) To complete the proof, note that $t\leq T$ and invoke Lemma~\ref{lm:Ni}.
\end{proof}

Note that the approximate best crowd $i^*$ may be different from the (actual) best arm, so the guarantee in Theorem~\ref{thm:UCB} is only as good as the difference
    $ \cost(i^*|R_0) - \argmin_i \cost(i|R_0)$.
Note that $i^*$ is in fact the best crowd for the basic special case of uniform  costs and two options (assuming that $R_0$ is gap-decreasing).

It is not clear whether the constants $\Lambda_i$ can be significantly improved. For uniform  costs we have
    $\Lambda_i = (\eps_{i^*}- \eps_i)^{-2} $,
which is essentially the best one could hope for. This is because one needs to try each crowd $i\neq i^*$ at least
    $\Omega(\Lambda_i)$
times to tell it apart from crowd $i^*$.
\footnote{This can be proved using an easy reduction from an instance of the MAB problem where each arm $i$ brings reward $1$ with probability $(1+\gap_i)/2$, and reward $0$ otherwise. Treat this as an instance of the \problem, where arms correspond to crowds, and options to rewards. An algorithm that finds the crowd with a larger gap in less than $\Omega(\Lambda_i)$ steps would also find an arm with a larger expected reward, which would violate the corresponding lower bound for the MAB problem (see~\cite{bandits-exp3}).}

\subsection{Analysis: lower bound for non-adaptive crowd selection}

We purpose of this section is argue that non-adaptive crowd-selection algorithms performs badly compared to $\AlgUCB$. We prove that the competitive ratio of any non-adaptive crowd-selection algorithm is bounded from below by (essentially) the number of crowds. We contrast this with an upper bound on the competitive ratio of $\AlgUCB$, which we derive from Theorem~\ref{thm:UCB}.

Here the competitive ratio of algorithm $\A$ (with respect to the deterministic benchmark) is defined as
    $ \max \frac{\cost(\A|R_0)}{\max_i \cost(i|R_0)}$,
where the outer $\max$ is over all problem instances in a given family of problem instances. We focus on a very simple family: problem instances with two options and uniform  costs, in which one crowd has gap $\gap>0$ and all other crowds have gap $0$; we call such  instances \emph{$\gap$-simple}.

Our result holds for a version of a composite stopping rule that does not use the total crowd. Note that considering the total crowd does not, intuitively, make sense for the $\gap$-simple problem instances, and we did not use it in the proof of Theorem~\ref{thm:UCB}, either.

\begin{theorem}[$\AlgRR$]\label{thm:LB}
Let $R_0$ be a symmetric single-crowd stopping rule with worst-case error rate $\rho$. Assume that the composite stopping rule does not use the total crowd. Consider a non-adaptive crowd-selection algorithm $\A$ whose distribution over crowds is $\mu$. Then for each $\gap>0$, the competitive ratio over the $\gap$-simple problem instances is at least
$ \frac{\sum_i c_i\,\mu_i}{\min_i c_i\,\mu_i}\; (1-2k\rho)$,
where $k$ is the number of crowds.
\end{theorem}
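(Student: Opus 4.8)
The plan is to prove the bound one $\gap$-simple instance at a time and then maximize over the family. Fix the unique informative crowd $j$ (the one with gap $\gap$); every other crowd has gap $0$ and hence an exactly symmetric response distribution. Assume, as the finiteness of the benchmark requires, that $R_0$ halts in finite expected time on $\D_j$, and write $\tau_j$ for that random number of samples, so that $\cost(j|R_0)=c_j\,\E[\tau_j]$. Because $\A$ is \emph{non-adaptive}, its crowd choices are i.i.d.\ draws from $\mu$ that are independent of all observed responses; so, writing $\tau$ for the stopping time in rounds of the composite rule and $N_i$ for the number of samples drawn from crowd $i$, a Wald-type identity gives $\E[N_i]=\mu_i\,\E[\tau]$, whence $\cost(\A|R_0)=\sum_i c_i\,\E[N_i]=(\sum_i c_i\mu_i)\,\E[\tau]$. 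Since a zero-gap crowd almost never stops (next paragraph), its stand-alone cost is infinite, so crowd $j$ realizes the deterministic benchmark $\min_i\cost(i|R_0)$, and the whole statement reduces to showing $\E[\tau]\ge(1-2k\rho)\,\E[\tau_j]/\mu_j$.

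The first real step I would carry out is a sub-claim: when $R_0$ is run on a zero-gap crowd it \emph{ever} stops with probability at most $2\rho$. Symmetry does the work here. On gap-$0$ data the two options are exchangeable, and since $R_0$ is symmetric it outputs either option with probability $\tfrac12$ conditioned on stopping; hence the probability of stopping within the first $m$ samples and declaring the wrong option equals $\tfrac12$ times the probability of stopping within $m$ samples. On the other hand, for a crowd with a small positive gap $\epsilon$ toward the correct option, that same ``stop within $m$ and output the wrong option'' probability is at most the worst-case error rate $\rho$. For fixed $m$ this probability is a finite sum over sample sequences and is continuous in $\epsilon$, so letting $\epsilon\to0^{+}$ shows the gap-$0$ value is also $\le\rho$; letting $m\to\infty$ then bounds the total stopping probability of a zero-gap crowd by $2\rho$.

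Next I would convert this per-crowd bound into the claimed factor via a coupling. For each crowd $i$ let $\tau_i^{*}\in\{1,2,\dots\}\cup\{\infty\}$ be the \emph{intrinsic} number of samples at which $R_0$ would fire on crowd $i$'s own response stream; these are mutually independent across crowds, with $\tau_j^{*}=\tau_j<\infty$ and, by the sub-claim, $\Pr[\tau_i^{*}<\infty]\le 2\rho$ for each $i\ne j$. In the run of $\A$ the composite rule stops at $\tau=\min_i \sigma_i(\tau_i^{*})$, where $\sigma_i(m)$ is the round carrying the $m$-th sample of crowd $i$. Let $G$ be the event that crowd $j$'s instance is among the first to fire; then $N_j=\tau_j$ on $G$ and $N_j<\tau_j$ off $G$, so $N_j\ge\tau_j\,\mathbf{1}_G$. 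The complement $G^{c}$ forces some zero-gap instance to have fired strictly earlier, hence $G^{c}\subseteq H:=\{\exists\,i\ne j:\tau_i^{*}<\infty\}$, and $H$ depends only on the response streams of the crowds other than $j$, so it is independent of $\tau_j$. Therefore $\E[\tau_j\,\mathbf{1}_{G^{c}}]\le\E[\tau_j\,\mathbf{1}_H]=\E[\tau_j]\,\Pr[H]\le 2k\rho\,\E[\tau_j]$, which gives $\E[N_j]\ge\E[\tau_j\,\mathbf{1}_G]\ge(1-2k\rho)\,\E[\tau_j]$.

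Finally, combining $\E[\tau]=\E[N_j]/\mu_j$ with this bound yields $\cost(\A|R_0)\ge(\sum_i c_i\mu_i)(1-2k\rho)\,\E[\tau_j]/\mu_j$, so the competitive ratio on the instance with informative crowd $j$ is at least $\frac{\sum_i c_i\mu_i}{c_j\mu_j}(1-2k\rho)$; taking the informative crowd to be $\argmin_i c_i\mu_i$ (the worst instance for $\A$ in the family) produces exactly $\frac{\sum_i c_i\mu_i}{\min_i c_i\mu_i}(1-2k\rho)$. I expect the main obstacle to be the sub-claim of the second paragraph — showing that a zero-gap crowd almost never triggers a low-error stopping rule — since it is the only place the hypothesis on $\rho$ enters and it needs the symmetry of $R_0$ together with the $\epsilon\to0^{+}$ continuity argument rather than a one-line tail bound. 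The other place requiring care is the coupling bookkeeping, namely isolating an event $H$ that is independent of $\tau_j$ so that the early-stopping mass can be charged cleanly against $\E[\tau_j]$.
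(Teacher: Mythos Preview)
Your proposal is correct and follows essentially the same approach as the paper: you prove the same key sub-claim (the paper's Lemma~\ref{lm:LB-infty}) that a symmetric $R_0$ on a gap-$0$ crowd ever stops with probability at most $2\rho$, then combine it with the same independence/coupling argument and Wald's identity. The only cosmetic differences are that the paper tracks the global stopping rounds $\sigma_i$ while you track the local sample count $N_j$ (equivalent via Wald), and your continuity step for the sub-claim is argued directly (finite sum of polynomials in $\epsilon$) whereas the paper phrases it as a distinguishability contradiction.
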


Note that
    $\min \, \frac{\sum_i c_i\,\mu_i}{\min_i c_i\,\mu_i} = k$,
where the $\min$ is taken over all distributions $\mu$.
The minimizing $\mu$ satisfies
    $\mu_i \sim 1/c_i$
for each crowd $i$, i.e. if $\mu$ corresponds to \AlgRR. The $(1-2k\rho)$ factor could be an artifact of our somewhat crude method to bound the ``contribution'' of the gap-$0$ crowds. We conjecture that this factor is unnecessary (perhaps under some minor assumptions on $R_0$).

To prove Theorem~\ref{thm:LB}, we essentially need to compare the stopping time of the composite stopping rule $R$ with the stopping time of the instance of $R_0$ that works with the gap-$\gap$ crowd. The main technical difficulty is to show that the other crowds are not likely to force $R$ to stop before this $R_0$ instance does. To this end, we use a lemma that $R_0$ is not likely to stop in finite time when applied to a gap-$0$ crowd.

\begin{lemma}\label{lm:LB-infty}
Consider a symmetric single-crowd stopping rule $R_0$ with worst-case error rate $\rho$. Suppose $R_0$ is applied to a crowd with gap $0$. Then
    $\Pr[\text{$R_0$ stops in finite time}] \leq 2\rho$.
\end{lemma}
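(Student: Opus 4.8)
The plan is to exploit the symmetry of $R_0$ together with an infinitesimal perturbation of the gap-$0$ crowd into a positive-gap crowd, for which the worst-case error rate $\rho$ gives control. Fix a gap-$0$ crowd with response distribution $\D$, and let $S\subseteq\options$ be the set of options attaining the maximum probability; since $\gap(\D)=0$ we have $|S|\geq 2$. The distribution $\D$ is invariant under every permutation of $S$, and by hypothesis $R_0$ is symmetric, i.e.\ its conditional stopping probabilities are unchanged under relabeling of options. Consequently the joint law of the pair (stopping time, output option) is invariant under every permutation of $S$. Writing $p=\Pr[\text{$R_0$ stops in finite time}]$, this gives that $s:=\Pr[\text{$R_0$ stops and outputs }w]$ takes the same value for every $w\in S$, so that $p = |S|\,s + r$, where $r := \Pr[\text{$R_0$ stops and outputs some option outside }S]\geq 0$.

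Next I would perturb. Fix a top option $x\in S$ and move an infinitesimal mass $\eps>0$ onto it, taking it from some other top option $y\in S$; this yields a crowd $\D_\eps$ with positive gap and unique correct answer $x^*=x$ (the other options still have strictly smaller probability). Its error rate is at most $\rho$, so $\Pr_{\D_\eps}[\text{output}\neq x]\leq\rho$. The heart of the argument is to transfer this bound back to $\D$ as $\eps\to 0$, which I would do through finite-horizon truncation. For any horizon $T$, the event ``$R_0$ stops within $T$ rounds and outputs an option $\neq x$'' depends on only finitely many samples, so its probability is a polynomial, hence continuous, function of $\eps$. Thus $\rho \ge \Pr_{\D_\eps}[\text{output}\neq x] \ge \Pr_{\D_\eps}[\text{stop within }T,\ \text{output}\neq x]$, and the right-hand side converges, as $\eps\to0$, to $\Pr_{\D}[\text{stop within }T,\ \text{output}\neq x]$; letting $T\to\infty$ this last quantity increases monotonically to $\Pr_\D[\text{stop in finite time and output}\neq x] = p - s$. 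Hence $\rho \ge p - s$.

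Finally I would combine the two relations. From $p=|S|\,s+r$ and $\rho\ge p-s=(|S|-1)s+r$ we obtain $2\rho \ge 2(|S|-1)s + 2r \ge |S|\,s + r = p$, where the last inequality uses $(|S|-2)\,s\ge 0$ and $r\ge0$, both valid since $|S|\ge 2$. This yields $p\le 2\rho$, as claimed.

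I expect the main obstacle to be the limiting step. The error-rate hypothesis only constrains positive-gap instances, and at gap exactly $0$ the rule may genuinely fail to stop, so one cannot simply set $\eps=0$; the finite-horizon truncation is precisely what makes the interchange of the two limits rigorous, combining elementary continuity in $\eps$ with monotone convergence in $T$. The bookkeeping over the possibly nonempty set of outputs lying outside $S$ is handled cleanly by the final inequality and is not a serious difficulty. I would also remark that the argument is insensitive to whether the error rate is defined jointly with, or conditionally on, stopping, since $\Pr_{\D_\eps}[\text{stop}]\le 1$ forces the joint probability $\Pr_{\D_\eps}[\text{stop and output}\neq x]$ to be at most $\rho$ in either case.
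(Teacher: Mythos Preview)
Your proof is correct and follows essentially the same strategy as the paper's: perturb the gap-$0$ crowd to a small positive gap, invoke the worst-case error rate $\rho$ on the perturbed instance, and pass to the limit via a finite-horizon truncation. Two minor differences are worth noting. First, the paper's proof is written only for two options (it explicitly says ``let $x$ and $y$ be the two options'' and uses $\alpha(0|t)=\tfrac12$), whereas your bookkeeping with $|S|$, $s$, and $r$ cleanly handles arbitrary $n$; the inequality $2(|S|-1)s+2r\geq |S|s+r$ is exactly what replaces the factor of $2$ coming from $\alpha(0|t)=\tfrac12$. Second, your continuity step (the probability of any horizon-$T$ event is a polynomial in $\eps$) is more direct than the paper's, which argues by contradiction that a discontinuity in $q(\gap,t,x)$ would allow distinguishing a fair coin from a gap-$\gap$ coin in $O(\delta^{-2})$ trials independent of $\gap$. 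Both routes are valid; yours is arguably the more elementary.
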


\begin{proof}
Intuitively, if $R_0$ stops early if the gap is $0$ then it is likely to make a mistake if the gap is very small but positive. However, connecting the probability in question with the error rate of $R_0$ requires some work.

Suppose $R_0$ is applied to a crowd with gap $\gap$. Let $q(\gap,t,x)$ be the probability that $R_0$ stops at round $t$ and ``outputs'' option $x$ (in the sense that by the time $R_0$ stops, $x$ is the majority vote).

We claim that for all rounds $t$ and each option $x$ we have
\begin{align}\label{eq:lm-LB-infty}
     \lim_{\gap\to 0}\; q(\gap,t,x) = q(0,t,x).
\end{align}
Indeed, suppose not. Then for some $\delta>0$ there exist arbitrarily small gaps $\gap>0$ such that
    $|q(\gap,t,x)-q(0,t,x)| >\delta$.
Thus it is possible to tell apart a crowd with gap $0$ from a crowd with gap $\gap$  by observing $\Theta(\delta^{-2})$ independent runs of $R_0$, where each run continues for $t$ steps. In other words, it is possible to tell apart a fair coin from a gap-$\gap$ coin using $\Theta(t\,\delta^{-2})$ ``coin tosses'', for fixed $t$ and $\delta>0$ and an arbitrarily small $\gap$. Contradiction. Claim proved.

Let $x$ and $y$ be the two options, and let $x$ be the correct answer. Let $q(\gap,t)$ be the probability that $R_0$ stops at round $t$. Let
    $\alpha(\gap|t) = q(\gap,t,y) / q(\gap,t) $
be the conditional probability that $R_0$ outputs a wrong answer given that it stops at round $t$. Note that by~\eqref{eq:lm-LB-infty} for each round $t$ it holds that
    $q(\gap,t)\to q(0,t)$ and $\alpha(\gap|t)\to \alpha(0|t)$
as $\gap\to 0$. Therefore for each round $t_0\in\N$ we have:
\begin{align*}
\rho    =    \textstyle \sum_{t\in\N}\; \alpha(\gap|t)\; q(\eps,t)
        \geq \textstyle \sum_{t\leq t_0 }\; \alpha(\gap|t) \; q(\eps,t)
        \to_{\gap\to\infty} \textstyle \sum_{t\leq t_0 }\; \alpha(0|t)\; q(0,t).
\end{align*}
Note that $\alpha(0|t) = \tfrac12$ by symmetry. It follows that
    $\sum_{t\leq t_0 }\; q(0,t) \leq 2\rho $
for each $t_0\in \N$. Therefore the probability that $R_0$ stops in finite time is
    $\sum_{t=1}^\infty\; q(0,t) \leq 2\rho $.
\end{proof}

\begin{proof}[Proof of Theorem~\ref{thm:LB}]
Suppose algorithm $\A$ is applied to an $\gap$-simple instance of the \problem. To simplify the notation, assume that crowd $1$ is the crowd with gap $\gap$ (and all other crowds have gap $0$).

Let $R_{(i)}$ be the instance of $R_0$ that corresponds to a given crowd $i$. Denote the composite stopping rule by $R$. Let $\sigma_R$ be the stopping time of $R$: the round in which $R$ stops.

For the following two definitions, let us consider an execution of algorithm $\A$ that runs forever (i.e., it keeps running even after $R$ decides to stop). First, let $\tau_i$ be the ``local'' stopping time of $R_{(i)}$: the number of samples from crowd $i$ that $R_{(i)}$ inputs before it decides to stop. Second, let $\sigma_i$ be the ``global'' stopping time of $R_{(i)}$: the round when $R_{(i)}$ decides to stop.
Note that $\sigma_R = \min_i\, \sigma_i$.

Let us use Lemma~\ref{lm:LB-infty} to show that $R$ stops essentially when $R_{(1)}$ tells it to stop. Namely:
\begin{align}\label{eq:pf-LB-sigma}
    \E[\sigma_1]\; (1-2k\rho) \leq \E[\sigma_R].
\end{align}
To prove~\eqref{eq:pf-LB-sigma}, consider the event
    $E \triangleq \{\min_{i>1} \tau_i =\infty\}$,
and let $1_E$ be the indicator variable of this event. Note that
    $\sigma_R \geq \sigma_1\, 1_E$
and that random variables $\sigma_1$ and $1_E$ are independent. It follows that
    $ \E[\sigma_R] \geq \Pr[E]\,\E[\sigma_1]$.
Finally, Lemma~\ref{lm:LB-infty} implies that $\Pr[E] \geq 1-2k\rho$. Claim proved.

Let $i_t$ be the option chosen by $\A$ in round $t$. Then by Wald's identity we have
\begin{align*}
\E[\tau_1]
    &= \E\left[ \sum_{t=1}^{\sigma_1} 1_{\{i_t=1\}} \right]
    = \E[1_{\{i_t=1\}}] \; \E[ \sigma_1]
    = \mu_1\; \E[\sigma_1] \\
\E[\cost(\A|R_0)]
    &= \E\left[\sum_{t=1}^{\sigma_R} c_{i_t} \right]
    = \E[c_{i_t}]\, \E[\sigma_R]
    = (\textstyle \sum_i c_i\,\mu_i)\; \E[\sigma_R] .
\end{align*}
Therefore, plugging in~\eqref{eq:pf-LB-sigma}, we obtain
\begin{align*}
\frac{\E[\cost(\A|R_0)]}{c_1\,\E[\tau_1]}
    \geq \frac{\textstyle \sum_i c_i\,\mu_i}{c_1\,\mu_1}\; (1-2k\rho).
\end{align*}
It remains to observe that $c_1\,\E[\tau_1]$ is precisely the expected total cost of the deterministic benchmark.
\end{proof}

\xhdr{Competitive ratio of $\AlgUCB$.}
Consider the case of two options and uniform  costs. Then (assuming $R_0$ is gap-decreasing) the approximate best crowd $i^*$ in Theorem~\ref{thm:UCB} is the best crowd. The competitive ratio of $\AlgUCB$ is, in the notation of Theorem~\ref{thm:UCB}, at most
    $1 + \frac{O(\Lambda \log T)}{\cost(i^*|R_0)}$.
This factor is close to $1$ when $R_0$ is tuned so as to decrease the error rate at the expense of increasing the expected running time.

\section{The randomized benchmark}
\label{sec:randomized-benchmark}

In this section we further discuss the randomized benchmark for crowd-selection algorithms. Informally, it is the best \emph{randomized} time-invariant policy given the latent information (response distributions of the crowds). Formally this benchmark is defined as $\min \cost(\mu|R_0)$, where the minimum is over all distributions $\mu$ over crowds, and $R_0$ is a fixed single-parameter stopping rule. Recall that in the definition of $\cost(\mu|R_0)$, the total crowd is treated as a single data source to which $R_0$ is applied.

 The total crowd under a given $\mu$ behaves as a single crowd whose response distribution $\D_\mu$ is given by
    $\D_\mu(x) = \E_{i\sim \mu}[\D_i(x)]$
for all options $x$. The gap of $\D_\mu$ will henceforth be called the \emph{induced gap} of $\mu$, and denoted $f(\mu) = \gap(\D_\mu)$. If the costs are uniform then $\cost(\mu|R_0)$ is simply the expected stopping time of $R_0$ on $\D_\mu$, which we denote $\tau(\D_\mu)$. Informally, $\tau(\D_\mu)$ is driven by the induced gap of $\mu$.

\OMIT{
\footnote{To make this a formal statement, we need to assume that the expected stopping time of $R_0$ for a gap-$\gap$ crowd is $\gap^{-2}$ up to a constant factor. (The factor may depend on $R_0$ but not on the response distribution of the crowd). Then
    $\cost(\mu|R_0) = \Theta(\gap^{-2})$,
where $\gap = \gap(\D_\mu)$.}
}

We show that the induced gap can be much larger than the gap of any crowd.

\begin{lemma}\label{lm:induced-gap}
Let $\mu$ be the uniform distribution over crowds. For any $\gap>0$ there exists a problem instance such that the gap of each crowd is $\gap$, and the induced gap of $\mu$ is at least $\tfrac{1}{10}$.
\end{lemma}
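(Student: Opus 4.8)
The plan is to exhibit an explicit instance in which every crowd only barely prefers the common correct answer $x^*$, but each crowd parks its runner-up mass on a \emph{distinct} decoy option. Averaging over crowds then dilutes every decoy by a factor of $k$ while leaving the mass on $x^*$ untouched, so the averaged distribution $\D_\mu$ has a large gap even though each $\D_i$ has gap only $\gap$.

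Concretely, I would take $k\geq 2$ crowds and $n=k+1$ options $\options=\{x^*,y_1\LDOTS y_k\}$, and set, for each crowd $i$,
\begin{align*}
\D_i(x^*) = \tfrac{1+\gap}{2}, \qquad \D_i(y_i) = \tfrac{1-\gap}{2}, \qquad \D_i(y_j)=0 \text{ for } j\neq i.
\end{align*}
Each $\D_i$ is a valid distribution supported on $\{x^*,y_i\}$; its largest probability value is $\tfrac{1+\gap}{2}$ and its second-largest is $\tfrac{1-\gap}{2}$, so $x^*$ is the unique most probable option for every crowd and $\gap_i=\gap$ exactly, as required by the statement and by the model's standing assumption.

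Next I would compute the induced distribution $\D_\mu(x)=\tfrac1k\sum_i\D_i(x)$. Since only crowd $i$ places positive mass on $y_i$, we get $\D_\mu(x^*)=\tfrac{1+\gap}{2}$ and $\D_\mu(y_i)=\tfrac{1-\gap}{2k}$ for every $i$. Hence the largest value of $\D_\mu$ is $\tfrac{1+\gap}{2}$ (still attained at $x^*$) and the second-largest is $\tfrac{1-\gap}{2k}$, so the induced gap is
\begin{align*}
f(\mu)=\gap(\D_\mu)=\tfrac{1+\gap}{2}-\tfrac{1-\gap}{2k}\;\geq\;\tfrac14\;>\;\tfrac{1}{10}
\end{align*}
for all $k\geq 2$ and all $\gap\in(0,1)$, which proves the claim with room to spare.

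There is essentially no hard step here; the whole content is the bookkeeping that makes the construction work, so the point to verify carefully is that the second-largest value of $\D_\mu$ is one of the \emph{diluted} decoys rather than some quantity that accumulates across crowds. This holds precisely because the decoys $y_1\LDOTS y_k$ are pairwise distinct: no averaging-induced reinforcement of any wrong option occurs, so each decoy is suppressed by the full factor $1/k$ while the correct answer keeps all of its mass. One small thing to note is that we are free to choose $k$ (and $n$) since the statement only asks for the \emph{existence} of an instance, so fixing $k=2$ already suffices.
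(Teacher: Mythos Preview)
Your proof is correct and uses the same core idea as the paper: give each crowd a distinct runner-up ``decoy'' so that averaging under $\mu$ dilutes every decoy while preserving the mass on $x^*$. The paper instantiates this with two crowds and three options via the distributions $(\tfrac25+\gap,\tfrac25,\tfrac15-\gap)$ and $(\tfrac25+\gap,\tfrac15-\gap,\tfrac25)$, yielding an induced gap of $\tfrac{1}{10}+\tfrac{3\gap}{2}$. Your construction is cleaner --- placing all of crowd $i$'s non-$x^*$ mass on a single private decoy $y_i$ --- and it both generalizes to arbitrary $k$ and delivers a stronger bound ($\geq\tfrac14$ rather than $\geq\tfrac{1}{10}$). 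The paper's version has the minor cosmetic advantage that all probabilities are strictly positive for small $\gap$, but this is not required by the model.
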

\begin{proof}
The problem instance is quite simple: there are two crowds and three options, and the response distributions are
    $(\tfrac25+\eps, \tfrac25,\tfrac15-\eps )$
and
    $(\tfrac25+\eps, \tfrac15-\eps, \tfrac25)$.
Then
    $\D_\mu = (\tfrac25 + \eps, \tfrac{3}{10}-\tfrac{\eps}{2}, \tfrac{3}{10}-\tfrac{\eps}{2})$.
\end{proof}

We conclude that the randomized benchmark does not reduce to the deterministic benchmark: in fact, it can be much stronger. Formally, this follows from Lemma~\ref{lm:induced-gap} under a very mild assumption on  $R_0$: that for any response distribution $\D$ with gap $\tfrac{1}{10}$ or more, and any response distribution $\D'$ whose gap is sufficiently small, it holds that $\tau(\D)\gg \tau(\D')$. The implication for the design of crowd-selection algorithms is that algorithms that zoom in on the best crowd may be drastically suboptimal. Instead, for some problem instances the right goal is to optimize over distributions over crowds.

However, the randomized benchmark coincides with the deterministic benchmark for some important special cases. First, the two benchmarks coincide if the costs are uniform and all crowds agree on the top \emph{two} options (and $R_0$ is gap-decreasing). Second, the two benchmarks may coincide if there are only two options ($|\options|=2$), see Lemma~\ref{lm:benchmarks-2options} below. To prove this lemma for non-uniform  costs, one needs to explicitly consider $\cost(\mu|R_0)$ rather than just argue about the induced gaps. Our proof assumes that the expected stopping time of $R_0$ is a concave function of the gap; it is not clear whether this assumption is necessary.

\begin{lemma}\label{lm:benchmarks-2options}
Consider the \problem with two options ($|\options|=2$). Consider a symmetric single-crowd stopping rule $R_0$. Assume that the expected stopping time of $R_0$ on response distribution $\D$ is a concave function of $\gap(\D)$.  Then the randomized benchmark coincides with the deterministic benchmark. That is,
    $\cost(\mu|R_0) \geq \min_i \cost(i|R_0)$
for any distribution $\mu$ over crowds.
\end{lemma}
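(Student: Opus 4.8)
The plan is to reduce everything to the one–dimensional function $T(\gap)$, defined as the expected stopping time of $R_0$ on a two-option distribution of gap $\gap$. This is well-defined under the symmetry assumption: the two gap-$\gap$ distributions $(\tfrac{1}{2}+\tfrac{\gap}{2},\, \tfrac{1}{2}-\tfrac{\gap}{2})$ and its coordinate permutation give the same expected stopping time, so the stopping time depends on a distribution only through its gap. The observation that is special to two options is that the induced gap is a \emph{convex combination} of the crowd gaps: since $\gap(\D)=2\D(x^*)-1$ for the more-probable option $x^*$, and $\D_\mu(x^*)=\sum_i \mu_i\,\D_i(x^*)$, we get $f(\mu)=\sum_i \mu_i\,\gap_i$.

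Next I would establish a Wald-type cost identity, $\cost(\mu|R_0) = \left(\sum_i \mu_i c_i\right)\, T(f(\mu))$. The expected number of rounds is $T(f(\mu))$ because the stream of observed options is i.i.d.\ from $\D_\mu$ (marginally over the crowd choice), and $\gap(\D_\mu)=f(\mu)$. For the cost, write $\cost(\mu|R_0)=\sum_t \E[c_{i_t}\,1_{\{\sigma\ge t\}}]$, where $\sigma$ is the realized stopping time and $i_t$ the crowd chosen in round $t$. The crowd $i_t$ is drawn from $\mu$ independently of the past, whereas $\{\sigma\ge t\}$ is measurable with respect to the options observed in rounds $1\LDOTS t-1$; hence $i_t$ and $1_{\{\sigma\ge t\}}$ are independent and each term factors as $\bar c_\mu\,\Pr[\sigma\ge t]$ with $\bar c_\mu=\sum_i\mu_i c_i$, yielding $\bar c_\mu\,T(f(\mu))$ after summing over $t$.

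With the identity in hand, the concavity hypothesis enters through Jensen's inequality: $T(f(\mu))=T\!\left(\sum_i \mu_i\gap_i\right)\ge \sum_i \mu_i\, T(\gap_i)$, so $\cost(\mu|R_0)\ge \left(\sum_i \mu_i c_i\right)\left(\sum_j \mu_j\, T(\gap_j)\right)$. It then remains to show this product dominates $\min_k c_k\, T(\gap_k)=\min_k \cost(k|R_0)$. I would do this by symmetrization: expanding the product as $\sum_{i,j}\mu_i\mu_j\, c_i\, T(\gap_j)=\tfrac{1}{2}\sum_{i,j}\mu_i\mu_j\,\bigl(c_i T(\gap_j)+c_j T(\gap_i)\bigr)$ and applying AM-GM termwise, $c_i T(\gap_j)+c_j T(\gap_i)\ge 2\sqrt{(c_i T(\gap_i))(c_j T(\gap_j))}\ge 2\min_k c_k T(\gap_k)$; since the weights $\mu_i\mu_j$ sum to $1$, the claim follows.

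I expect the main obstacle to be pinning down the Wald-type identity cleanly, in particular justifying that the random per-round cost $c_{i_t}$ is independent of the stopping event $\{\sigma\ge t\}$ despite the fact that within a round the chosen crowd and the observed option are correlated; the resolution is that the choice of $i_t$ precedes (and is independent of) the options observed in earlier rounds, which are what determine whether $R_0$ has already stopped. Once that factorization is secured, the concavity (Jensen) step and the closing AM-GM inequality are routine. A secondary point worth flagging is that this argument nowhere uses monotonicity of $T$: concavity alone drives both the uniform- and nonuniform-cost cases, which is consistent with the paper's remark that it is unclear whether the concavity assumption is truly necessary.
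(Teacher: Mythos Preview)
Your proposal is correct and follows essentially the same route as the paper: linearity of the induced gap in the two-option case, a Wald identity for the cost, Jensen via concavity, and then the product inequality $(\sum_i \mu_i c_i)(\sum_j \mu_j T(\gap_j))\ge \min_k c_k T(\gap_k)$. The one genuine difference is in how you prove this last inequality. The paper isolates it as a standalone claim (their Claim~A.1) and proves it by induction on $k$, peeling off one coordinate and using $\tfrac{\alpha_i}{\alpha_1}+\tfrac{\beta_i}{\beta_1}\ge 2$; your symmetrization plus AM--GM argument, $c_i T(\gap_j)+c_j T(\gap_i)\ge 2\sqrt{c_i T(\gap_i)\,c_j T(\gap_j)}\ge 2\min_k c_k T(\gap_k)$, is shorter and arguably cleaner, avoiding the induction entirely. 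Your more careful justification of the Wald step (independence of $i_t$ from $\{\sigma\ge t\}$) is also a welcome addition over the paper's one-line appeal to Wald's identity.
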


\begin{proof}
Let $\mu$ be an arbitrary distribution over crowds. Recall that $f(\mu)$ denotes the induced gap of $\mu$. Note that
    $f(\mu) = \mu\cdot \vec{\gap}$.
To see this, let
    $\options = \{x,y\}$,
where $x$ is the correct answer, and write
\begin{align*}
\gap(\D_\mu)
    &= \D_\mu(x) - \D_\mu(y)
    = \mu\cdot \vec{D}(x) - \mu\cdot \vec{D}(y)
    = \mu \cdot \left( \vec{D}(x) - \vec{D}(y) \right)
    = \mu\cdot \vec{\gap}.
\end{align*}

Let $\A$ be the non-adaptive crowd-selection algorithm that corresponds to $\mu$. For each round $t$, let $i_t$ be the crowd chosen by $\A$ in this round, i.e. an independent sample from $\mu$. Let $N$ be the realized stopping time of $\A$. Let $\tau(\gap)$ be the expected stopping time of $R_0$ on response distribution with gap $\gap$. Note that
    $\E[N] = \tau( f(\mu) )$.
Therefore:
\begin{align*}
\cost(\mu|R_0)
    &= \textstyle \E\left[ \sum_{t=1}^N c_{i_t} \right]
    = \E[c_{i_t}] \; \E[N] & \text{by Wald's identity}\\
    &= (\vec{c}\cdot \mu)\;\;  \tau(\vec{\gap}\cdot\mu)
    \geq (\vec{c}\cdot \mu)\, \textstyle \sum_i\; \mu_i\, \tau(\gap_i)
        & \text{by concavity of $\tau(\cdot)$} \\
    &\geq  \min_i\; c_i\, \tau(\gap_i)
        & \text{by Claim~\ref{cl:standard-inequality}}\\
    &= \min_i \cost(i | R_0 ).
\end{align*}
We have used a general fact that
 $(\vec{x}\cdot \vec{\alpha})(\vec{x}\cdot \vec{\beta}) \geq \min_i\alpha_i \beta_i$
for any vectors $\vec{\alpha},\vec{\beta}\in \R^k_+$ and any $k$-dimensional distribution $\vec{x}$. A self-contained proof of this fact can be found in the Appendix (Claim~\ref{cl:standard-inequality}).
\end{proof}

\section{Crowd selection against the randomized benchmark}
\label{sec:randomized-benchmark-algs}

\newcommand{\mM}{\mathcal{M}}

We design a crowd-selection algorithm with guarantees against the randomized benchmark. We focus on uniform  costs, and (a version of) the single-crowd stopping rule from Section~\ref{sec:single-crowd}.

Our single-crowd stopping rule $R_0$ is as follows. Let $\empir{\gap}{*}{t}$ be the empirical gap of the total crowd. Then  $R_0$ stops upon reaching round $t$ if and only if
\begin{align}\label{eq:stopping-rule-total}
 \empir{\gap}{*}{t} > \errorC/\sqrt{t} \quad \text{or} \quad  t=T.
\end{align}
Here $\errorC$ is the ``quality parameter'' and $T$ is a given time horizon.

Throughout this section, let $\mM$ be the set of all distributions over crowds, and let $f^* = \max_{\mu\in \mM} f(\mu)$ be the maximal induced gap. The benchmark cost is then at least $\Omega((f^*)^{-2})$.

We design an algorithm $\A$ such that $\cost(\A|R_0)$ is upper-bounded by (essentially) a function of $f^*$, namely
    $O\left( (f^*)^{-(k+2)} \right)$.
We interpret this guarantee as follows: we match the benchmark cost for a distribution over crowds whose induced gap is
    $(f^*)^{2/(k+2)}$.
By Lemma~\ref{lm:induced-gap},  the gap of the best crowd may be much smaller, so this is can be a significant improvement over the deterministic benchmark.

\begin{theorem}\label{thm:AlgUnif}
Consider the \problem with uniform costs. Let $R_0$ be the single-crowd stopping rule given by~\refeq{eq:stopping-rule-total}.
There exists a crowd-selection algorithm $\A$ such that
$$ \cost(\A|R_0) \leq O\left( (f^*)^{-(k+2)}\; \sqrt{\log T} \right).
$$
\end{theorem}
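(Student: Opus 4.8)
The plan is to reduce the problem to \emph{finding and then committing to} a near-optimal mixing distribution $\mu^*$, the maximizer of the induced gap $f(\mu)=\gap(\D_\mu)$, and to show that once the algorithm's empirical crowd-frequency is close to such a $\mu^*$, the total crowd behaves like a single crowd of gap $\Theta(f^*)$, so that the stopping rule \eqref{eq:stopping-rule-total} fires after $O(\errorC^2/(f^*)^2)$ further rounds. The structural fact that makes the accounting work is that every crowd, and hence every mixture $\D_\mu$, has the same top option $x^*$: for any option $y\neq x^*$ and any chosen crowd $i$, the per-round contribution $\D_i(x^*)-\D_i(y)$ to the total-crowd margin is \emph{non-negative}, so samples drawn while searching never push the margin in the wrong direction --- they only dilute it. Throughout I would control the empirical total-crowd gap by \eqref{eq:conf-rad-gap} applied to the total crowd, so that $\empir{\gap}{*}{t}$ tracks the true margin $\min_{y\neq x^*}(\D_\nu(x^*)-\D_\nu(y))$ of the realized crowd-frequency $\nu$ up to $O(\errorC/\sqrt{t})$.

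Since $f^*$ and $\mu^*$ are unknown, the algorithm $\A$ I propose combines a \emph{discretization of the simplex} with a \emph{doubling guess} for $f^*$. For each guess $g=2^{-j}$ ($j=1,2,\dots$) build an $\ell_\infty$-net $\mathcal N_g$ of $\mM$ of granularity $\Theta(g)$; since $\mM$ is $(k-1)$-dimensional, $|\mathcal N_g|=O(g^{-(k-1)})$. Because $\D_\mu$ is linear in $\mu$ and the gap is Lipschitz, $f$ is Lipschitz in $\mu$, so whenever $g\le f^*$ the net contains a point $\hat\mu$ with $f(\hat\mu)\ge f^*-O(g)\ge f^*/2$. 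The algorithm cycles through the net points, committing to each $\mu\in\mathcal N_g$ for a block of $\Theta(\errorC^2\,g^{-2})$ rounds (sampling crowds i.i.d.\ from $\mu$), and after every round it tests the global stopping rule \eqref{eq:stopping-rule-total}, stopping as soon as it fires. A good net point $\hat\mu$, once reached, contributes blocks of i.i.d.\ samples of true margin $\ge f^*/2$, so by \eqref{eq:conf-rad-gap} the total-crowd margin it produces eventually exceeds the threshold $\errorC/\sqrt{t}$.

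For the cost bound I would fix the level $g=\Theta(f^*)$ at which success first occurs (all coarser levels contribute a geometrically smaller total, by the doubling schedule) and bound the number of rounds spent before the good point $\hat\mu$ forces a stop. The total-crowd margin after committing to $\hat\mu$ is at least (good-block rounds)$\times(f^*/2)$ divided by the total number of rounds, because all other rounds contribute non-negatively; forcing this to exceed $\errorC/\sqrt{t}$ while the good block is diluted by the $O(|\mathcal N_g|)$ other blocks at this level gives a per-level round count of order $|\mathcal N_g|\cdot g^{-2}\cdot\errorC$ times a dilution factor. Collecting the factors --- $|\mathcal N_g|=O((f^*)^{-(k-1)})$ net points, an $O((f^*)^{-2})$ cost to trigger a gap-$\Theta(f^*)$ crowd, the extra $O(1/f^*)$ lost to dilution, and the single $\errorC=\Theta(\sqrt{\log T})$ from the confidence term --- yields $\cost(\A|R_0)=O((f^*)^{-(k+2)}\sqrt{\log T})$, as claimed; the horizon $T$ enters only through $\errorC$ and guarantees that the truncation ``$t=T$'' in \eqref{eq:stopping-rule-total} is never the reason for stopping on the relevant instances.

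The main obstacle is the \emph{dilution} caused by the stopping rule acting on the \emph{total} crowd rather than on a fresh batch: unlike in a standard explore-then-commit analysis, samples spent while searching the net are not merely wasted exploration but actively shrink the eventual margin of the good distribution, and because the search ranges over the $(k-1)$-dimensional simplex the number of candidate distributions is itself $(f^*)^{-\Theta(k)}$, which is exactly what inflates the benchmark exponent from $2$ to $k+2$. The delicate points are therefore (i) showing, via Lipschitzness of $f$ in $\mu$, that a net of granularity $\Theta(f^*)$ contains a distribution of induced gap $\ge f^*/2$ --- recall from Lemma~\ref{lm:induced-gap} that such gaps can far exceed any single crowd's; (ii) bounding the dilution so that the good block still crosses the threshold within its allotted budget despite the non-negative but possibly vanishing contributions of all other blocks; and (iii) verifying that the doubling over the unknown $f^*$ costs only a constant factor, so that the coarse levels do not dominate.
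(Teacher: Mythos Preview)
Your route is genuinely different from the paper's, and the dilution bookkeeping has a gap.

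The paper does not commit to net points in round-robin. It casts crowd selection as a Lipschitz MAB instance on the simplex (arms are distributions $\mu$, virtual reward $f(\mu)$) and runs an \emph{adaptive} bandit algorithm --- uniform discretization plus \UCB, in phases of doubling length --- whose regret is $R(t)=O(t^{1-1/(k+2)}\log T)$. The bridge to the stopping rule is Lemma~\ref{lm:inducedMAB-reduction}: Azuma gives $\empir{\gap}{*}{t}\approx f(\bar\mu_t)$ for the time-average $\bar\mu_t=\tfrac1t\sum_{s\le t}\mu_s$, and \emph{concavity} of $f$ (Claim~\ref{lm:inducedGap-props}) yields $f(\bar\mu_t)\ge\tfrac1t\sum_s f(\mu_s)=f^*-R(t)/t$. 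It is the sublinear regret, not any per-block budget, that drives $f(\bar\mu_t)$ toward $f^*$ and triggers \eqref{eq:stopping-rule-total} at $t=O((f^*)^{-(k+2)})$.

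In your scheme the play within a level is non-adaptive, and you lower-bound the total-crowd margin by the single good block plus ``non-negative'' leftovers. That makes the dilution factor equal to the net size $M=|\mathcal N_g|=\Theta((f^*)^{-(k-1)})$, not $O(1/f^*)$: with the good block a $1/M$ fraction of the rounds, the certified margin is only $f^*/(2M)$, and $f^*/(2M)>\errorC/\sqrt t$ forces $t=\Omega\bigl(M^2\errorC^2/(f^*)^2\bigr)=\Omega\bigl(\errorC^2(f^*)^{-2k}\bigr)$. Your ``extra $O(1/f^*)$ lost to dilution'' is correct only when $k=2$ (where indeed $M=\Theta(1/f^*)$); for $k\ge3$ the exponent becomes $2k$ rather than $k+2$, and with your stated per-level budget the threshold is in fact never crossed at level $g=\Theta(f^*)$, since the guaranteed margin $\Theta(f^* g^{k-1})$ shrinks faster in $g$ than the threshold $\Theta(g^{(k+1)/2})$. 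The ingredients that close the gap are precisely the two you replaced: adaptivity on the net, so that the time-averaged virtual reward is $f^*-o(1)$ rather than $\Theta(f^*/M)$; and concavity of $f$, not just non-negativity of the per-round drifts, to convert that into a large induced gap for the time-averaged play $\bar\mu_t$.
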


The proof of Theorem~\ref{thm:AlgUnif} relies on some properties of the induced gap: concavity and Lipschitz-continuity. Concavity is needed for the reduction lemma (Lemma~\ref{lm:inducedMAB-reduction}), and Lipschitz-continuity is used to solve the MAB problem that we reduce to.

\begin{claim}\label{lm:inducedGap-props}
Consider the induced gap $f(\mu)$ as a function on $\mM\subset \Re^k_+$. First, $f(\mu)$ is a concave function. Second,
    $ |f(\mu) - f(\mu')| \leq n\,\|\mu-\mu'\|_1$
for any two distributions $\mu_1,\mu_2\in \mM$.
\end{claim}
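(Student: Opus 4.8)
The plan is to prove the two parts separately, and the crucial observation underlying both is that the identity of the top option of the induced distribution $\D_\mu$ is the \emph{same} for every $\mu$. Indeed, since all crowds agree that $x^*$ is the unique most probable option, we have $\D_i(x^*) > \D_i(y)$ for every crowd $i$ and every $y \neq x^*$; averaging over $i \sim \mu$ gives $\D_\mu(x^*) > \D_\mu(y)$ for all $y \neq x^*$ and all $\mu$. Hence $x^*$ is the top option of $\D_\mu$ throughout $\mM$, and we may write $f(\mu) = \D_\mu(x^*) - \max_{y \neq x^*} \D_\mu(y)$ without any case analysis over which option is on top. This is the single fact that makes the argument work; without it, already the case $n=2$ shows the gap need not be concave (there $\gap$ is an absolute value, hence convex).

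For concavity, I would note that for each fixed option $x$ the map $\mu \mapsto \D_\mu(x) = \sum_i \mu_i \D_i(x)$ is affine in $\mu$. In the displayed formula for $f$, the first term $\D_\mu(x^*)$ is therefore affine, hence concave; the second term $\max_{y \neq x^*} \D_\mu(y)$ is a pointwise maximum of affine functions, hence convex, so its negation is concave. As a sum of two concave functions, $f$ is concave on $\mM$.

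For the Lipschitz bound, the key estimate is that each coordinate map is $1$-Lipschitz in the $\ell_1$ norm: for any option $x$ we have $|\D_\mu(x) - \D_{\mu'}(x)| = |\sum_i (\mu_i - \mu'_i)\,\D_i(x)| \le \sum_i |\mu_i - \mu'_i| = \|\mu-\mu'\|_1$, using $0 \le \D_i(x) \le 1$. Applying this to $x = x^*$ bounds the change in the first term of $f$, and combining it with the elementary inequality $|\max_y a_y - \max_y b_y| \le \max_y |a_y - b_y|$ bounds the change in the second term; adding the two contributions yields $|f(\mu)-f(\mu')| \le 2\,\|\mu-\mu'\|_1$, which in particular is at most $n\,\|\mu-\mu'\|_1$ since $n \ge 2$.

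I do not expect a genuine technical obstacle here; the only thing one must not overlook is the structural point in the first paragraph. The entire content of the claim is that the shared-correct-answer assumption pins down the top option uniformly in $\mu$, converting the otherwise-convex \emph{largest-coordinate} term into an affine one; everything after that is routine convexity bookkeeping together with the elementary Lipschitz estimate for averages.
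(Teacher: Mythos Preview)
Your proof is correct and essentially the same as the paper's. The paper writes $f(\mu)=\min_{x\neq x^*}\mu\cdot(\vec{D}(x^*)-\vec{D}(x))$ and invokes ``min of linear functions is concave'' directly, while you use the equivalent decomposition ``affine minus (max of affine)''; for Lipschitz the paper bounds each linear piece by $n\|\mu-\mu'\|_1$ whereas you obtain the sharper constant $2$ and note $2\leq n$. Your explicit justification that $x^*$ remains the top option of $\D_\mu$ for every $\mu$ is a point the paper uses but does not spell out.
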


\begin{proof}
Let $\mu$ be a distribution over crowds. Then
\begin{align}
f(\mu)
    = \D_\mu(x^*) - \max_{x\in \options \setminus \{x^*\}} \D_\mu(x)
    = \min_{x\in \options \setminus \{x^*\}}
    \mu\cdot \left( \vec{D}(x^*) - \vec{D}(x)\right).
        \label{eq:min-of-linear-fns}
\end{align}
Thus, $f(\mu)$ is concave as a minimum of concave functions. The second claim follows because
$$
(\mu-\mu') \cdot \left( \vec{D}(x^*) - \vec{D}(x)\right)
    \leq n\,\|\mu-\mu\|_1 \quad
    \text{for each option $x$}. \qedhere
$$
\end{proof}

\subsection{Proof of Theorem~\ref{thm:AlgUnif}}

\xhdr{Virtual rewards.}
Consider the MAB problem with virtual rewards, where arms correspond to distributions $\mu$ over crowds, and the virtual reward is equal to the induced gap $f(\mu)$; call it the \emph{induced MAB problem}. The standard definition of regret is with respect to the best fixed arm, i.e. with respect to $f^*$. We interpret an algorithm $\A$ for the induced MAB problem as a crowd-selection algorithm: in each round $t$, the crowd is sampled independently at random from  the distribution $\mu_t\in \mM$ chosen by $\A$.

\begin{lemma}\label{lm:inducedMAB-reduction}
Consider the \problem with uniform costs. Let $R_0$ be the single-crowd stopping rule given by~\refeq{eq:stopping-rule-total}. Let $\A$ be an MAB algorithm for the induced MAB instance. Suppose $\A$ has regret
    $O(t^{1-\gamma} \log T)$
with probability at least $1-\tfrac{1}{T}$, where
    $\gamma\in (0,\tfrac12]$.
Then
$$ \cost(\A|R_0) \leq O\left( (f^*)^{-1/\gamma}\; \sqrt{\log T} \right).
$$
\end{lemma}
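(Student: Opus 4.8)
The plan is to run $\A$ as a crowd-selection algorithm --- in round $s$ the crowd is an independent draw from the distribution $\mu_s\in\mM$ that $\A$ proposes --- and to show that the stopping rule $R_0$ of \eqref{eq:stopping-rule-total} fires after only $t^\dagger = O\!\left((f^*)^{-1/\gamma}\,\mathrm{polylog}(T)\right)$ rounds on a high-probability event. Since the costs are uniform, $\cost(\A|R_0)$ is exactly the expected stopping time, and the ``$t=T$'' clause of $R_0$ caps the stopping time by $T$, which will let us absorb all low-probability failure events. Let $o_s\in\options$ be the option returned in round $s$, so the total crowd has $\NSamples{*}{t}=t$ and $\empir{\D}{*}{t}(x)=\tfrac1t\sum_{s\le t}\mathbf{1}\{o_s=x\}$. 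Conditioned on the history before round $s$, the response $o_s$ is drawn from $\D_{\mu_s}$, so the conditional mean of the empirical distribution is $\bar{\D}_t(x):=\tfrac1t\sum_{s\le t}\D_{\mu_s}(x)$. By \eqref{eq:min-of-linear-fns}, $\D_{\mu_s}(x^*)-\D_{\mu_s}(x)\ge f(\mu_s)$ for every $x\ne x^*$, hence $\bar{\D}_t(x^*)-\bar{\D}_t(x)\ge g(t):=\tfrac1t\sum_{s\le t}f(\mu_s)$. The regret guarantee rewrites as $g(t)=f^*-R(t)/t$, so on the event that $R(t)=O(t^{1-\gamma}\log T)$ (probability $\ge 1-\tfrac1T$) we get $g(t)\ge f^*-O(t^{-\gamma}\log T)$.

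Next I would pass from $\bar{\D}_t$ to the observed $\empir{\D}{*}{t}$ by concentration. Because $\mu_s$ is chosen adaptively the samples are not i.i.d., but $\mathbf{1}\{o_s=x\}-\D_{\mu_s}(x)$ is a bounded martingale difference sequence, so the Azuma--Hoeffding corollary \eqref{eq:conf-rad} applies verbatim to the total crowd. Taking $C=\Theta\!\left(\sqrt{\log(nT)}\right)$ and a union bound over all $n$ options and all rounds $t\le T$, as in \eqref{eq:conf-rad-gap}, defines a \emph{clean event} of probability $\ge 1-\tfrac1T$ on which $x^*$ is the empirical plurality of the total crowd and $\empir{\gap}{*}{t}\ge g(t)-O(C/\sqrt t)$ for every $t\le T$.

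Combining the regret event and the clean event gives $\empir{\gap}{*}{t}\ge f^*-O(t^{-\gamma}\log T)-O\!\left(\sqrt{\log(nT)/t}\right)$. The rule \eqref{eq:stopping-rule-total} stops once $\empir{\gap}{*}{t}>\errorC/\sqrt t$, for which it suffices that $f^*$ exceed the two error terms plus the threshold $\errorC/\sqrt t$. Since $\gamma\le\tfrac12$, the term $t^{-\gamma}\log T$ dominates the $t^{-1/2}$ terms, so solving $f^*\ge c\,t^{-\gamma}\,\mathrm{polylog}(T)$ yields a stopping time $t^\dagger=O\!\left((f^*)^{-1/\gamma}\,\mathrm{polylog}(T)\right)$. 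On the complement of the two good events (total probability $O(1/T)$) the horizon clause forces the stopping time to be at most $T$, contributing $O(1/T)\cdot T=O(1)$ to the expectation. Hence $\cost(\A|R_0)=\E[\text{stopping time}]\le t^\dagger+O(1)$, which is the asserted bound $O\!\left((f^*)^{-1/\gamma}\sqrt{\log T}\right)$ up to the precise power of the logarithm.

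The main obstacle is the concentration step: one must justify the martingale form of \eqref{eq:conf-rad} for the total crowd under adaptively chosen, time-varying mixtures $\mu_s$, and argue that $x^*$ stays the empirical argmax (so that $\empir{\gap}{*}{t}$ is genuinely lower-bounded by the averaged correct-vs-rest margin rather than by some spurious pair). The remaining delicate point is purely the logarithmic bookkeeping --- choosing the confidence constant $C$ and the quality parameter $\errorC$ and balancing $R(t)/t$ against the $\errorC/\sqrt t$ threshold --- which is what pins down the clean exponent $1/\gamma$ (via $\gamma=2/(k+2)$, say, feeding Theorem~\ref{thm:AlgUnif}); the concavity and Lipschitz estimates of Claim~\ref{lm:inducedGap-props} are what make an MAB algorithm with the assumed regret available in the first place.
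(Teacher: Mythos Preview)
Your proposal is correct and follows essentially the same route as the paper: Azuma--Hoeffding on the total crowd to relate $\empir{\gap}{*}{t}$ to the gap of the time-averaged mixture $\bar\mu_t$, then the regret bound to lower-bound that gap by $f^*-R(t)/t$, then solve the resulting inequality against the stopping threshold. The one cosmetic difference is that the paper packages your inequality $\bar{\D}_t(x^*)-\bar{\D}_t(x)\ge \tfrac1t\sum_s f(\mu_s)$ as ``$f(\bar\mu_t)\ge \bar f_t$ by concavity of $f$'' (Claim~\ref{lm:inducedGap-props}), whereas you derive it directly from the min-of-linear representation~\eqref{eq:min-of-linear-fns}; these are the same step, and your closing remark that concavity is needed only to \emph{obtain} the MAB algorithm slightly undersells that you have in fact used it here too. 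Your explicit handling of the failure events via the horizon clause $t=T$ (contributing $O(1)$ to the expectation) and your honest caveat about the exact power of $\log T$ are both appropriate.
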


\newcommand{\empirD}{\empirQty{\D}}
\newcommand{\empirG}{\empirQty{\gap}}

\begin{proof}
Let $\mu_t\in \mM$ be the distribution chosen by $\A$ is round $t$. Then the total crowd returns each option $x$ with probability $\mu_t\cdot\vec{D}(x)$, and this event is conditionally independent of the previous rounds given $\mu_t$.

Fix round $t$. Let $N_t(x)$ be the number times option $x$ is returned up to time $t$ by the total crowd, and let
    $ \empirD_t(x) = \tfrac{1}{t}\, N_t(x)$
be the corresponding empirical frequency. Note that
$$ \E\left[ \empirD_t(x) \right] = \bar{\mu}_t\cdot \vec{D}(x),
    \quad\text{where } \bar{\mu}_t \triangleq \frac{1}{t}\sum_{s=0}^t \mu_s.
$$

The time-averaged distribution over crowds $\bar{\mu}_t$ is a crucial object that we will focus on from here onwards. By Azuma-Hoeffding inequality, for each $C>0$ and each option $x\in\options$ we have
\begin{align}\label{eq:totalCrowd-Azuma}
\Pr\left[ \left|\empirD_t(x) -
 \bar{\mu}_t\cdot \vec{D}(x) \right|
    < \frac{C}{\sqrt{t}} \right] > 1-e^{-\Omega(C^2)}.
\end{align}
Let
    $\empirG_t = \gap(\empirD_t)$
be the empirical gap of the total crowd. Taking the Union Bound in \eqref{eq:totalCrowd-Azuma} over all options $x\in \options$, we conclude that $\empirG_t$ is close to the induced gap of $\bar{\mu}_t$:
$$ \Pr\left[ \left| \empirG_t - f(\bar{\mu}_t) \right|
    < \frac{C}{\sqrt{t}} \right] > 1-n\,e^{-\Omega(C^2)},\quad
    \text{for each $C>0$.}
$$
In particular, $R_0$ stops at round $t$ with probability at least $1-\frac{1}{T}$ as long as
\begin{align}\label{eq:inducedMAB-stopping}
    f(\bar{\mu}_t) > t^{-1/2}\;( \errorC+ O(\sqrt{\log T})).
\end{align}

By concavity of $f$, we have
    $ f(\bar{\mu}_t) \geq \bar{f}_t $,
where
    $\bar{f}_t \triangleq \frac{1}{t}\sum_{s=0}^t f(\mu_s)$
is the time-averaged virtual reward. Now, $ t\bar{f}_t$ is simply the total virtual reward by time $t$, which is close to $f^*$ with high probability. Specifically, the regret of $\A$ by time $t$ is
    $ R(t) = t(f^* - \bar{f}_t) $,
and we are given a high-probability upper bound on $R(t)$.

Putting this all together,
    $ f(\bar{\mu}_t) \geq \bar{f}_t \geq f^* - R(t)/t $.
An easy computation shows that
    $f(\bar{\mu}_t)$
becomes sufficiently large to trigger the stopping condition~\refeq{eq:inducedMAB-stopping} for
    $t = O\left( (f^*)^{-1/\gamma}\; \sqrt{\log T} \right)$.
\end{proof}

\xhdr{Solving the induced MAB problem.}
We derive a (possibly inefficient) algorithm for the induced MAB instance. We treat $\mM$ as a subset of $\Re^k$, endowed with a metric
    $d(\mu,\mu') = n\,\|\mu-\mu'\|_1$.
By Lemma~\ref{lm:inducedGap-props}, the induced gap $f(\mu)$ is Lipschitz-continuous with respect to this metric. Thus, in the induced MAB problem arms form a metric space $(\mM,d)$ such that the (expected) rewards are Lipschitz-continuous for this metric space. MAB problems with this property are called \emph{Lipschitz MAB}~\cite{LipschitzMAB-stoc08}.

We need an algorithm for Lipschitz MAB that works with virtual rewards.
We use the following simple algorithm from \cite{Bobby-nips04,LipschitzMAB-stoc08}. We treat $\mM$ as a subset of $\Re^k$, and apply this algorithm to $\Re^k$. The algorithm runs in phases $j=1,2,3,\ldots$ of duration $2^j$. Each phase $j$ is as follows. For some fixed parameter $\delta_j>0$, discretize $\Re^k$ uniformly with granularity $\delta_j$. Let $S_j$ be the resulting set of arms. Run bandit algorithm $\UCB$~\cite{bandits-ucb1} on the arms in $S_j$. (For each arm in $S_j\setminus \mM$, assume that the reward is always $0$.) This completes the specification of the algorithm.

Crucially, we can implement $\UCB$ (and therefore the entire uniform algorithm) with virtual rewards, by using $\empirG_t$ as an estimate for $f(\mu)$. Call the resulting crowd-selection algorithm \AlgUnif.

Optimizing the $\delta_j$ using a simple argument from \cite{Bobby-nips04},  we obtain regret
    $O(t^{1-1/(k+2)}\, \log T)$
with probability at least $(1-\tfrac{1}{T})$. Therefore, by Lemma~\ref{lm:inducedMAB-reduction} $\cost(\AlgUnif|R_0)$ suffices to prove Theorem~\ref{thm:AlgUnif}.

We can also use a more sophisticated \emph{zooming algorithm} from~\cite{LipschitzMAB-stoc08}, which obtains the same in the worst case, but achieves better regret for ``nice'' problem instances. This algorithm also can be implemented for virtual rewards (in a similar way). However, it is not clear how to translate the improved regret bound for the zooming algorithm into a better cost bound for the \problem.

\OMIT{
\begin{lemma}
Let $R_0$ be the single-crowd stopping rule given by~\refeq{eq:stopping-rule-total}. Let $\D$ be a response distribution with gap $\gap>0$. Let $\tau(\D)$ be the expected stopping time of $R_0$ on $\D$. Then
$\tfrac{1}{c}\, \errorC\, \gap^{-2}
    \leq \tau(\D)
    \leq c\, \errorC\, \gap^{-2}$
as long as
$\errorC > c
    \sqrt{\log(T+\tfrac{n}{\gap} \log \tfrac{n}{\gap})}$,
for a sufficiently large absolute constant $c$.
\end{lemma}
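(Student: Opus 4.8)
The plan is to reuse the clean-event/concentration argument behind Theorem~\ref{thm:single-crowd}, but to push it in two directions: a matching upper \emph{and} lower bound on the stopping time, now for a \emph{constant} quality parameter $\errorC$. The rule~\refeq{eq:stopping-rule-total} stops at the first round $t$ where the empirical gap $\widehat{\gap}_t$ crosses the shrinking threshold $\errorC/\sqrt t$ (or at $t=T$). Since $\widehat{\gap}_t$ concentrates around the true gap $\gap$ to within $O(1/\sqrt t)$, the rule should fire essentially at the deterministic crossing time $t_0$ defined by $\gap\sqrt{t_0}=\errorC$, i.e. $t_0=(\errorC/\gap)^2$; the whole proof consists in showing that the actual (random) stopping time is $\Theta(t_0)$ both with high probability and in expectation.

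First I would fix a round-dependent confidence width $C_t=\Theta(\sqrt{\log(nt^2)})$ so that, by \eqref{eq:conf-rad-gap} and a union bound over all options and all rounds $t\le T$, the \emph{clean event} $\{\,|\widehat{\gap}_t-\gap|\le C_t/\sqrt t\ \text{for all } t\le T\,\}$ holds with probability $1-o(1)$. The structural role of the hypothesis $\errorC>c\sqrt{\log(T+\tfrac n\gap\log\tfrac n\gap)}$ is precisely to guarantee $C_t\le\tfrac12\errorC$ over the entire relevant range of $t$ (both up to the horizon $T$ and near $t_0$); I would isolate this as the first step, since it is what lets the constant $\errorC$ dominate the logarithmic concentration slack.

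On the clean event the two bounds are then symmetric. For the upper bound, $\widehat{\gap}_t\ge\gap-C_t/\sqrt t$, so the stopping inequality $\widehat{\gap}_t>\errorC/\sqrt t$ is implied once $\gap\sqrt t>\errorC+C_t$, and since $C_t\le\tfrac12\errorC$ it suffices that $\gap\sqrt t>\tfrac32\errorC$, i.e. the rule has fired by $O(t_0)$; one also checks $t_0\le T$ so that the horizon clause is not what triggers the stop. For the lower bound, $\widehat{\gap}_t\le\gap+C_t/\sqrt t$, so stopping at round $t$ forces $\gap\sqrt t>\errorC-C_t\ge\tfrac12\errorC$, hence $t=\Omega(t_0)$; thus on the clean event the rule cannot stop before $\Omega(t_0)$. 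To convert this high-probability stopping time into the expectation $\tau(\D)$, I would use the same tail-integration device as in Theorem~\ref{thm:single-crowd}: let the confidence parameter grow, union-bound the residual failure probability, and integrate, so that the $o(1)$ failure event contributes only $O(t_0)$ (and not $O(T)$) to $\E$. Combining the two directions yields the claimed two-sided bound $\tau(\D)=\Theta(t_0)=\Theta((\errorC/\gap)^2)$.

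The main obstacle is the lower bound in the presence of the horizon and the tail. Both the $t=T$ clause and the small-probability failure of the clean event can only \emph{decrease} the stopping time, so to keep $\tau(\D)=\Omega(t_0)$ I must verify that the single hypothesis on $\errorC$ does double duty: it must simultaneously force $C_t\le\tfrac12\errorC$ for all $t\le T$ (controlling the fluctuations) and guarantee $t_0\le T$ (so that typical runs actually reach the crossing time before the horizon cuts them off). Balancing the absolute constant $c$ so that one inequality on $\errorC$ secures both properties—rather than merely the one-sided concentration needed for Theorem~\ref{thm:single-crowd}—is the delicate part of the argument.
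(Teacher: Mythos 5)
Your plan follows the only natural route (the clean-event concentration machinery of Theorem~\ref{thm:single-crowd} applied around the deterministic crossing time $t_0=(\errorC/\gap)^2$), and most of its steps are sound: the union bound with $C_t=\Theta(\sqrt{\log(nt^2)})$, the observation that the hypothesis on $\errorC$ forces $C_t\le\tfrac12\errorC$ over the whole range, ruling out stops at $t\le t_0/4$ (where $\errorC/\sqrt{t}\ge 2\gap$), and the tail integration for the expectation all go through, with the $\tfrac{n}{\gap}\log\tfrac{n}{\gap}$ term in the hypothesis being exactly what makes the union bound over the $\Theta(t_0)$ relevant rounds affordable. (For calibration: the paper itself offers no proof to compare against---this lemma sits in a commented-out block of the source.) But note what you have actually proved: $\tau(\D)=\Theta(\errorC^2\,\gap^{-2})$, whereas the statement asserts $\tau(\D)=\Theta(\errorC\,\gap^{-2})$, with $\errorC$ to the \emph{first} power. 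These differ by a factor of $\errorC$, which is unbounded above under the hypothesis (it only lower-bounds $\errorC$), so your closing claim that you have reached ``the claimed two-sided bound'' is not accurate. Indeed, your own computation shows the first-power form cannot hold except when $\errorC=\Theta(1)$, which the hypothesis $\errorC>c\sqrt{\log T+\cdots}$ excludes; the statement is almost surely a typo for $\errorC^2\gap^{-2}$, but a proof must flag and resolve this discrepancy rather than silently substitute the corrected exponent.

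The second, independent gap is your ``double duty'' claim. You correctly identify that the lower bound $\tau(\D)=\Omega(t_0)$ requires $t_0\le T$, but then assert that the single hypothesis on $\errorC$ can be made to secure this by balancing the constant $c$. It cannot: the hypothesis is a \emph{lower} bound on $\errorC$, and no choice of an absolute constant in a lower bound can imply $(\errorC/\gap)^2\le T$, since $\errorC$ may be taken arbitrarily large and $\gap$ arbitrarily small independently of $T$. Concretely, $\errorC=10\,\gap\sqrt{T}$ satisfies the hypothesis for a wide range of parameters, yet then the rule in \refeq{eq:stopping-rule-total} stops at the horizon $t=T\ll(\errorC/\gap)^2$ and the lower bound fails; even at the minimal admissible $\errorC\approx c\sqrt{\log T}$ one additionally needs $\gap^2\gtrsim(\log T)/T$ for the crossing to precede the horizon. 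A correct statement needs either an extra assumption such as $T\ge c'\,\errorC^2\gap^{-2}$, or the conclusion restated as $\tau(\D)=\Theta\bigl(\min(T,\;\errorC^2\gap^{-2})\bigr)$; note the expectation \emph{upper} bound needs no such caveat (the horizon only helps there, and your tail sum $\sum_{t\ge 9t_0} n\,e^{-\Omega(t\gap^2)}$ is already negligible under the hypothesis). With the exponent corrected and the horizon condition added, your argument is a complete and correct proof.
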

}

\section{The bi-criteria objective}
\label{sec-bicriteria}

In this section we state our results with respect to the bi-criteria objective, for both deterministic and randomized benchmarks. Recall that our bi-criteria objective focuses on the worst-case error rates.

We only consider the case of uniform costs. Let $k\geq 2$ be the number of crowds.

\xhdr{Worst-case error rates.}
Let $R_0$ be a single-crowd stopping rule. Let $\error(R_0)$ be the worst-case error rate of $R_0$, taken over all single-crowd instances (i.e., all values of the gap).

Let $R$ be the composite stopping rule based on $R_0$. Let $(\A,R_0)$ denote the \Algorithm in which a crowd-selection algorithm $\A$ is used  together with the stopping rule $R$. Let $\error(\A|R_0)$ be the worst-case error rate of $(\A,R_0)$, over all problem instances. Then
\begin{align}\label{eq:A-error}
 \error(\A|R_0) \leq (k+1)\, \error(R_0).
\end{align}

Note that the worst-case error rate of  benchmark is simply $\error(R_0)$. (It is achieved on a problem instance in which all crowds have gap which maximizes the error rate of $R_0$.) Thus, using the same $R_0$ roughly equalizes the worst-case error rate between $\A$ and the benchmarks.

\xhdr{Absolute benchmarks.}
We consider benchmarks in which both the best crowd (resp., the best distribution over crowds) and the stopping rule are chosen by the benchmark. Thus, the benchmark cost is not relative to any particular single-crowd stopping rule. We call such benchmarks \emph{absolute}.

Let $T(\rho)$ be the smallest time horizon $T$ for which the single-crowd stopping rule in \eqref{eq:stopping-rule-total-body} achieves $\error(R_0)\leq \rho$. Fix error rate $\rho>0$ and time horizon $T\geq T(\rho)$. We focus on symmetric, gap-decreasing single-crowd stopping rules $R_0$ such that $\error(R_0)\leq \rho$
and $R_0$ must stop after $T$ rounds; let $\mathcal{R}(\rho,T)$ be the family of all such stopping rules.

Fix a problem instance. Let $i^*$ be the crowd with the largest bias, and let $\mu^*$ be the distribution over crowds with the largest induced bias. The \emph{absolute deterministic benchmark} (with error rate $\rho$ and time horizon $T\geq T(\rho)$) is defined as
$$ \bench(i^*,\rho,T)
    =  \min_{R_0\in \mathcal{R}(\rho,T)} \cost(i^*|R_0).
$$
Likewise, the \emph{absolute randomized benchmark} is defined as
$$ \bench(\mu^*,\rho,T)
    =  \min_{R_0\in \mathcal{R}(\rho,T)} \cost(\mu^*|R_0).
$$

\begin{theorem}[bi-criteria results]
\label{thm:UCB-bi}
Consider the \problem with $k$ crowds and uniform costs. Fix error rate $\rho>0$ and time horizon $T\geq T(\rho)$. Then:

\begin{itemize}
\item[(a)] {\em Deterministic benchmark.}
There exists a \Algorithm $(\A,R_0)$ such that
\begin{align*}
\cost(\A|R_0)  &\leq \bench(i^*,\rho,T) + O(\Lambda \log T),
    \text{ where }
    \Lambda = \textstyle \sum_{i\neq i^*}
        \left(\bias_{i^*}-\bias_i \right)^{-2} ,\\
\error(\A|R_0) &\leq (k+1)\,\rho.
\end{align*}

\item[(b)] {\em Randomized benchmark.}
There exists a \Algorithm $(\A,R_0)$ such that
\begin{align*}
\cost(\A|R_0)
    &\leq O(\log T \log \tfrac{1}{\rho})
     \;(\bench(\mu^*,\rho,T))^{1+k/2} \\
   \error(\A|R_0) &\leq (k+1)\,\rho.
\end{align*}
\end{itemize}
\end{theorem}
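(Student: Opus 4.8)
The plan is to derive both parts as corollaries of the per-$R_0$ cost guarantees proved earlier (Theorems~\ref{thm:UCB} and~\ref{thm:AlgUnif}) together with the worst-case error-rate bound~\refeq{eq:A-error}. The unifying observation is that each absolute benchmark is the minimum of a relative cost over $R_0\in\mathcal{R}(\rho,T)$, so it suffices to run our crowd-selection algorithm with a single well-chosen member of this family and then compare against the (smaller) benchmark cost. In both parts the error-rate claim $\error(\A|R_0)\leq(k+1)\rho$ is immediate: we only ever use a stopping rule $R_0\in\mathcal{R}(\rho,T)$, for which $\error(R_0)\leq\rho$, and~\refeq{eq:A-error} supplies the factor $(k+1)$.

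For part (a) I would take $R_0^\star=\argmin_{R_0\in\mathcal{R}(\rho,T)}\cost(i^*|R_0)$, so that $\cost(i^*|R_0^\star)=\bench(i^*,\rho,T)$ by definition, and run $\AlgUCB$ with $R_0^\star$. Since $R_0^\star$ is symmetric, gap-decreasing, and stops within $T$ rounds, Theorem~\ref{thm:UCB} applies and gives $\cost(\AlgUCB|R_0^\star)\leq\cost(i^*|R_0^\star)+O(\Lambda\log T)$. For uniform costs the approximate best crowd $i^*=\argmin_i c_i/\gap_i^2$ coincides with the largest-bias crowd (by gap-decreasingness), and $\Lambda_i=(c_i(f_{i^*}-f_i))^{-2}$ reduces to $(\bias_{i^*}-\bias_i)^{-2}$ (the uniform cost constant is absorbed into the $O(\cdot)$); substituting yields exactly the stated additive bound. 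The one point to verify is that the additive term $O(\Lambda\log T)$ depends only on the biases and on $T$, not on the particular $R_0^\star$, so that minimizing the leading term over $\mathcal{R}(\rho,T)$ leaves it intact.

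For part (b) I would instantiate $\AlgUnif$ from Theorem~\ref{thm:AlgUnif} with the total-crowd rule $R_0$ of~\refeq{eq:stopping-rule-total-body}, tuning its quality parameter $\errorC$ (and using $T\geq T(\rho)$) so that $R_0\in\mathcal{R}(\rho,T)$. Theorem~\ref{thm:AlgUnif} then gives $\cost(\AlgUnif|R_0)\leq O((f^*)^{-(k+2)}\sqrt{\log T})$, where $f^*=f(\mu^*)$ is the maximal induced gap. The remaining work is to re-express this in terms of $\bench(\mu^*,\rho,T)$. For uniform costs $\cost(\mu^*|R_0)=\tau(\D_{\mu^*})$, the expected stopping time of $R_0$ on the gap-$f^*$ distribution $\D_{\mu^*}$; since every $R_0\in\mathcal{R}(\rho,T)$ has error rate at most $\rho$, Theorem~\ref{thm:single-crowd-LB} gives $\cost(\mu^*|R_0)\geq\Omega((f^*)^{-2}\log\tfrac{1}{\rho})$ for each such rule, hence $\bench(\mu^*,\rho,T)\geq\Omega((f^*)^{-2}\log\tfrac{1}{\rho})\geq\Omega((f^*)^{-2})$. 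Rearranging gives $(f^*)^{-2}\leq O(\bench(\mu^*,\rho,T))$, and the exponent identity $(f^*)^{-(k+2)}=((f^*)^{-2})^{1+k/2}$ then produces $(\bench(\mu^*,\rho,T))^{1+k/2}$; the $\sqrt{\log T}$ of Theorem~\ref{thm:AlgUnif} and the discarded $\log\tfrac{1}{\rho}$ are absorbed into the stated $O(\log T\log\tfrac{1}{\rho})$ prefactor.

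The genuinely load-bearing step is the lower bound $\bench(\mu^*,\rho,T)\geq\Omega((f^*)^{-2}\log\tfrac{1}{\rho})$ in part (b): the whole point of the randomized benchmark is that its difficulty is governed by the \emph{induced} gap $f^*$ rather than by any individual crowd's gap, so I must be careful to apply the coin-tossing lower bound (Theorem~\ref{thm:single-crowd-LB}) to the total crowd $\D_{\mu^*}$ at gap $f^*$, using the uniform-cost identity $\cost(\mu^*|R_0)=\tau(\D_{\mu^*})$ to legitimately treat $\mu^*$ as a single data source. Part (a), by contrast, is essentially bookkeeping once Theorem~\ref{thm:UCB} and~\refeq{eq:A-error} are in hand. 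A minor but necessary check in both parts is that the chosen $R_0$ genuinely lies in $\mathcal{R}(\rho,T)$: symmetry and gap-decreasingness are evident, while the ``error rate $\leq\rho$ at horizon $T$'' requirement is exactly what the hypothesis $T\geq T(\rho)$ together with the tuning of $\errorC$ provides.
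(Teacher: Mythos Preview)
Your proposal is correct and follows essentially the same route as the paper's own proof sketch: for part~(a) you run $\AlgUCB$ with the benchmark-optimal $R_0^\star\in\mathcal{R}(\rho,T)$ and invoke Theorem~\ref{thm:UCB}, and for part~(b) you run $\AlgUnif$ with the tuned total-crowd rule~\refeq{eq:stopping-rule-total}, invoke Theorem~\ref{thm:AlgUnif}, and then translate the $(f^*)^{-(k+2)}$ bound into $\bench(\mu^*,\rho,T)^{1+k/2}$ via the single-crowd lower bound of Theorem~\ref{thm:single-crowd-LB}; in both parts the error-rate claim comes from~\refeq{eq:A-error}. Your write-up is in fact more explicit than the paper's sketch, which merely says the cost bound in~(b) ``follows from Theorem~\ref{thm:AlgUnif}, and the upper and lower bounds in Section~\ref{sec:single-crowd}''; you have correctly identified that the lower bound is what converts $f^*$ into the benchmark and that the exponent identity $(f^*)^{-(k+2)}=((f^*)^{-2})^{1+k/2}$ is the bridge.
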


\begin{myproof}[Sketch]
For part (a), we use the version of $\AlgUCB$ as in Theorem~\ref{thm:UCB}, with the single-crowd stopping rule $R_0$ from the absolute deterministic benchmark. The upper bound on $\cost(\A|R_0)$ follows from Theorem~\ref{thm:UCB}. The upper bound on $\error(\A|R_0)$ follows from \eqref{eq:A-error}.

For part (b), we use the algorithm from Theorem~\ref{thm:AlgUnif}, together with the stopping rule given by
\eqref{eq:stopping-rule-total}. The stopping rule has time horizon $T$; the quality parameter $\errorC$ is tuned so that the worst-case error rate matches that in the absolute randomized benchmark. The upper bound on $\cost(\A|R_0)$ follows from Theorem~\ref{thm:AlgUnif}, and the upper and lower bounds in Section~\ref{sec:single-crowd}. The upper bound on $\error(\A|R_0)$ follows from \eqref{eq:A-error}.
\end{myproof}

\xhdr{A lower bound on the error rate.}
Fix a single-crowd stopping rule $R_0$ with $\rho = \error(R_0)$, and a crowd-selection algorithm $\A$. To complement \eqref{eq:A-error}, we conjecture that
    $\error(\A|R_0) \geq \rho$.
We prove a slightly weaker result: essentially, if the composite stopping rule does not use the total crowd, then
    $\error(\A|R_0) \geq \rho\,(1-2k\rho)$.

We will need a mild assumption on $\A$: essentially, that it never commits to stop using any given crowd. Formally, $\A$ is called \emph{non-committing} if for every problem instance, each time $t$, and every crowd $i$, it will choose crowd $i$ at some time after $t$ with probability one. (Here we consider a run of $\A$ that continues indefinitely, without being stopped by the stopping rule.)

\begin{lemma}\label{lm:error-rate-LB}
Let $R_0$ be a symmetric single-crowd stopping rule with worst-case error rate $\rho$. Let $\A$ be a non-committing crowd-selection algorithm, and let $R$ be the composite stopping rule based on $R_0$ which does not use the total crowd. If $\A$ is used in conjunction with $R$, the worst-case error rate is at least $\rho\,(1-2k\rho)$, where $k$ is the number of crowds.
\end{lemma}

\begin{myproof}
Suppose $R_0$ attains the worst-case error rate for a crowd with gap $\gap$. Consider the problem instance in which one crowd (say, crowd $1$) has gap $\gap$ and all other crowds have gap $0$. Let $R_{(i)}$ be the instance of $R_0$ that takes inputs from crowd $i$, for each $i$. Let $E$ be the event that each $R_{(i)}$, $i>1$ does not ever stop. Let $E'$ be the event that $R_{(1)}$ stops and makes a mistake. These two events are independent, so the error rate of $R$ is at least $\Pr[E]\, \Pr[E']$. By the choice of the problem instance, $\Pr[E']=\rho$. And by Lemma~\ref{lm:LB-infty}, $\Pr[E] \geq 1-2k\rho$. It follows that the error rate of $R$ is at least $\rho\, (1-2k\rho)$.
\end{myproof}

\section{Experimental results: single crowd}
\label{sec:expts-single-crowd}

We conduct two experiments. First, we analyze real-life workloads to find which gaps are typical for response distributions that arise in practice. Second, to study the performance of the single-crowd stopping rule suggested in Section~\ref{sec:single-crowd}, using a large-scale simulation with a realistic distribution of gaps. We are mainly interested in the tradeoff between the error rate and the expected stopping time. We find that this tradeoff is acceptable in practice.

\xhdr{Typical gaps in real-life workloads.}
We analyze several batches of microtasks  extracted from a commercial crowdsourcing platform (approx. 3000 microtasks total). Each batch consists of microtasks of the same type, with the same instructions for the workers.  Most microtasks are related to relevance assessments for a web search engine. Each microtask was given to at least 50 judges coming from the same ``crowd".

In every batch, the empirical gaps of the microtasks are very close to being \emph{uniformly distributed} over the range. A practical take-away is that assuming a Bayesian prior on the gap would not be very helpful, which justifies and motivates our modeling choice not to assume Bayesian priors. In Figure~\ref{fig:CDF-gap}, we provide CDF plots for two of the batches; the plots for the other batches are similar.

\begin{figure}[h]
        \centering
        \begin{subfigure}[b]{0.5\textwidth}
                \centering
\begin{tikzpicture}
\begin{axis}[
    width=6cm,
]
\addplot [color=blue, mark=.] table [x=index, y=gap] {chart_gap_distribution.data};
\addplot [color=black] coordinates { (0,0.1832) (128,1.0664) }
[yshift=-25pt] node[pos=0.5] {$R^2 = 0.9215$};
\end{axis}
\end{tikzpicture}
                \caption{Batch 1: 128 microtasks, 2 options each}
                \label{fig:CDF-gap-a}
        \end{subfigure}%
        \begin{subfigure}[b]{0.5\textwidth}
                \centering
\begin{tikzpicture}
\begin{axis}[
    width=6cm,
]
\addplot [color=blue, mark=.] table [x=index, y=gap] {chart_aggregate_gap_distribution.data};
\addplot [color=black] coordinates { (0,0.1614) (604,1.1278) }
[yshift=-25pt] node[pos=0.5] {$R^2 = 0.9433$};
\end{axis}
\end{tikzpicture}
                \caption{Batch 2: 604 microtasks, variable \#options}
                \label{fig:CDF-gap-b}
        \end{subfigure}
        \caption{CDF for the empirical gap in real-life workloads.}\label{fig:CDF-gap}
\end{figure}
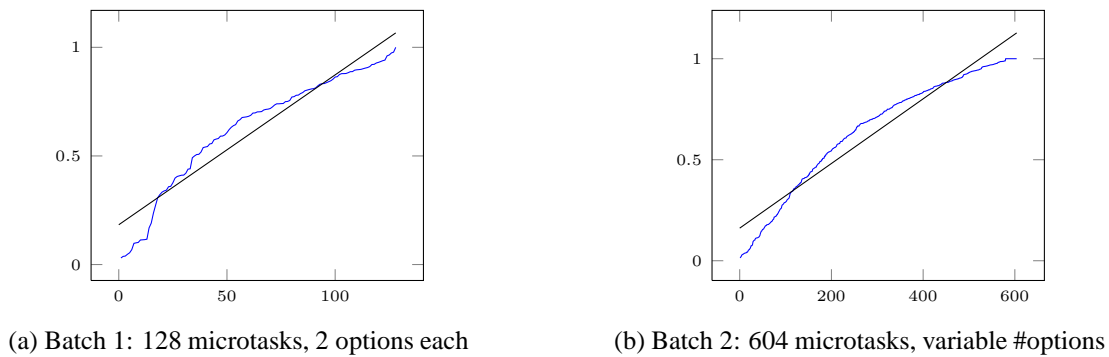

\xhdr{Our single-crowd stopping rule on simulated workloads.}
We study the performance of the single-crowd stopping rule suggested in Section~\ref{sec:single-crowd}. Our simulated workload consists of 10,000 microtasks with two options each. For each microtask, 
the gap is is chosen independently and uniformly at random in the range $[0.05, 1]$. This distribution of gaps is realistic according to the previous experiment. (Since there are only two options the gap fully describes the response distribution.)

We vary the parameter $\errorC$ and for each $\errorC$ we measure the average total cost (i.e., the stopping time averaged over all microtasks) and the error rate. The results are reported in Figure~\ref{fig:synthetic-singleCrowd}. In particular, for this workload, an error rate of $< 5\%$ can be obtained with an average of $<8$ workers per microtask.

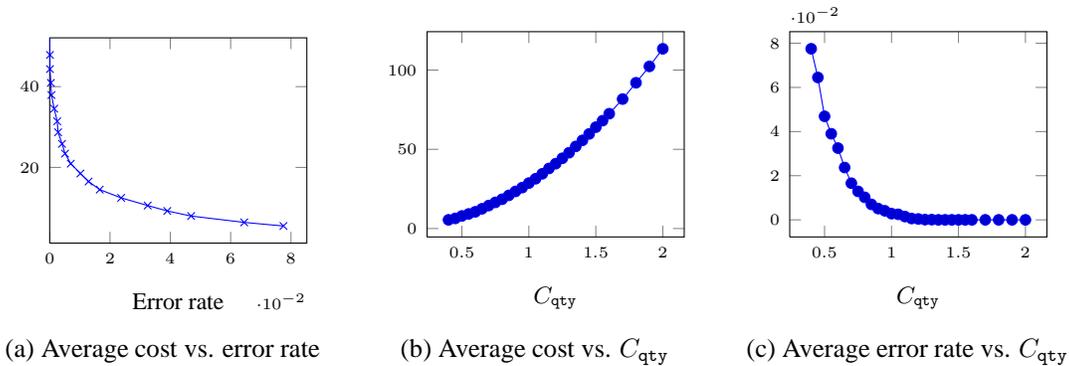
\begin{figure}[h]
        \centering
        \begin{subfigure}[b]{0.3\textwidth}
                \centering
                \begin{tikzpicture}
\begin{axis}[
    width=5cm,
        xlabel=Error rate,
        legend pos=north east,
        xmin=0.00001,
]
\addplot [color=blue, mark=x] table [x=adap_errors, y=adap_cost] {chart_uniform_bias_cost_v_error.data};
\end{axis}
\end{tikzpicture}
                \caption{Average cost vs. error rate}
        \end{subfigure}%
        \begin{subfigure}[b]{0.3\textwidth}
                \centering
               \begin{tikzpicture}
\begin{axis}[
        width=5cm,
        xlabel=$\errorC$,
        legend pos=north east
]
\addplot table [x=Threshold, y=adap_cost] {chart_uniform_bias_cost_v_error.data};
\end{axis}
\end{tikzpicture}
                \caption{Average cost vs. $\errorC$}
        \end{subfigure}%
        \begin{subfigure}[b]{0.3\textwidth}
                \centering
             \begin{tikzpicture}
\begin{axis}[
        width=5cm,
        xlabel=$\errorC$,
        legend pos=north east
]
\addplot table [x=Threshold, y=adap_errors] {chart_uniform_bias_cost_v_error.data};
\end{axis}
\end{tikzpicture}
                \caption{Average error rate vs. $\errorC$}
        \end{subfigure}
        \caption{Our single-crowd stopping rule on the synthetic workload.}\label{fig:synthetic-singleCrowd}
\end{figure}

Our stopping rule adapts to the gap of the microtask: it uses only a few workers for easy microtasks (ones with a large gap), and more workers for harder microtasks (those with a small gap). In particular, we find that our stopping rule requires significantly smaller number of workers than a non-adaptive stopping rule: one that always uses the same number of workers while ensuring a desired error rate.

\section{Experimental results: crowd-selection algorithms}
\label{sec:expts-multi-crowds}

We study the experimental performance of the various crowd-selection algorithms discussed in Section~\ref{sec:multi-crowd}. Specifically, we consider algorithms $\AlgUCB$ and $\AlgThompson$, and compare them to our straw-man solutions: $\AlgEER$ and $\AlgRR$.%
\footnote{In the plots, we use shorter names for the algorithms: respectively, $\AlgUCBshort$, $\AlgThompsonshort$, $\AlgEERshort$, and $\AlgRRshort$.}
Our goal is both to compare the different algorithms and to show that the associated costs are practical.  We find that \AlgEER consistently outperforms \AlgRR for very small error rates, \AlgUCB significantly outperforms both across all error rates, and \AlgThompson significantly outperforms all three.

We use all crowd-selection algorithms in conjunction with the composite stopping rule based on the single-crowd stopping rule proposed Section~\ref{sec:single-crowd}. Recall that the stopping rule has a ``quality parameter" $\errorC$ which implicitly controls the tradeoff between the error rate and the expected stopping time.

We use three simulated workloads. All three workloads consist of microtasks with two options, three crowds, and unit costs. In the first workload, which we call the \emph{easy workload}, the crowds have
gaps $(0.3,0,0)$. That is, one crowd has gap $0.3$ (so it returns the correct answer with probability $0.8$), and the remaining two crowds have gap $0$ (so they provide no useful information). This is a relatively easy workload for our crowd-selection algorithms because the best crowd has a much larger gap than the other crowds, which makes the best crowd easier to identify. In the second workload, called the \emph{medium workload}, crowds have gaps $(0.3,0.1,0.1)$, and in the third workload, called the \emph{hard workload}, the crowds have gaps $(0.3,0.2,0.2)$. The third workload is hard(er) for the crowd-selection algorithms in the sense that the best crowd is hard(er) to identify, because its gap is not much larger than the gap of the other crowds. The order that the crowds are presented to the algorithms is randomized for each instance, but is kept the same across the different algorithms.

The quality of an algorithm is measured by the tradeoff between its average total cost and its error rate. To study this tradeoff, we vary the quality parameter $\errorC$ to obtain (essentially) any desired error rate. We compare the different algorithms by reporting the average total cost of each algorithm (over 20,000 runs with the same quality parameter) for a range of error rates. Specifically, for each error rate we report the average cost of each algorithm normalized to the average cost of the naive algorithm  \AlgRR (for the same error rate). See
Figure~\ref{fig:multiCrowd-main} for the main plot: the average cost vs. error rate plots for all three workloads. Additional results, reported in Figure~\ref{fig:multiCrowd-details} (see  page~\pageref{fig:multiCrowd-details}) show the raw average total costs and error rates for the range of values of the quality parameter $\errorC$.

\begin{figure}[h]
        \centering
        \begin{subfigure}[b]{0.32\textwidth}
                \centering
                \begin{tikzpicture}
\begin{axis}[
        width=6cm,
        xlabel=Error rate,
        legend pos=south east,
            xmin=0.001,
        ymin=0,
        xticklabel style={
                /pgf/number format/.cd,
                fixed,
                fixed zerofill,
                precision=2,
        },
]
\addplot [color=red,mark=x] table [x=3-ph_errors, y=3-ph_norm_cost] {chart_bias0_cost_v_error.data};
\addlegendentry{\AlgEERshort}
\addplot [color=blue,mark=x] table [x=UCB_errors, y=UCB_norm_cost] {chart_bias0_cost_v_error.data};
\addlegendentry{\AlgUCBshort}
\addplot [color=green,mark=x] table [x=Thom_errors, y=Thom_norm_cost] {chart_bias0_cost_v_error.data};
\addlegendentry{\AlgThompsonshort}
\end{axis}
\end{tikzpicture}
                \caption{Easy: gaps $(.3,0,0)$.}
        \end{subfigure}%
        \begin{subfigure}[b]{0.32\textwidth}
                \centering
                \begin{tikzpicture}
\begin{axis}[
        width=6cm,
        xlabel=Error rate,
        legend pos=south east,
            xmin=0.001,
        ymin=0,
        xticklabel style={
                /pgf/number format/.cd,
                fixed,
                fixed zerofill,
                precision=2,
        },
]
\addplot [color=red,mark=x] table [x=3-ph_errors, y=3-ph_norm_cost] {chart_bias1_cost_v_error.data};
\addlegendentry{\AlgEERshort}
\addplot [color=blue,mark=x] table [x=UCB_errors, y=UCB_norm_cost] {chart_bias1_cost_v_error.data};
\addlegendentry{\AlgUCBshort}
\addplot [color=green,mark=x] table [x=Thom_errors, y=Thom_norm_cost] {chart_bias1_cost_v_error.data};
\addlegendentry{\AlgThompsonshort}
\end{axis}
\end{tikzpicture}
                \caption{Medium: gaps $(.3,.1,.1)$.}
        \end{subfigure}%
        \begin{subfigure}[b]{0.32\textwidth}
                \centering
                \begin{tikzpicture}
\begin{axis}[
        width=6cm,
        xlabel=Error rate,
        legend pos=south east,
            xmin=0.001,
        ymin=0,
        xticklabel style={
                /pgf/number format/.cd,
                fixed,
                fixed zerofill,
                precision=2,
        },
]
\addplot [color=red,mark=x] table [x=3-ph_errors, y=3-ph_norm_cost] {chart_bias2_cost_v_error.data};
\addlegendentry{\AlgEERshort}
\addplot [color=blue,mark=x] table [x=UCB_errors, y=UCB_norm_cost] {chart_bias2_cost_v_error.data};
\addlegendentry{\AlgUCBshort}
\addplot [color=green,mark=x] table [x=Thom_errors, y=Thom_norm_cost] {chart_bias2_cost_v_error.data};
\addlegendentry{\AlgThompsonshort}
\end{axis}
\end{tikzpicture}
                \caption{Hard: gaps $(.3,.2,.2)$.}
        \end{subfigure}
        \caption{Crowd-selection algorithms: error rate vs.
            average total cost (relative to $\AlgRR$).}
        \label{fig:multiCrowd-main}
\end{figure}
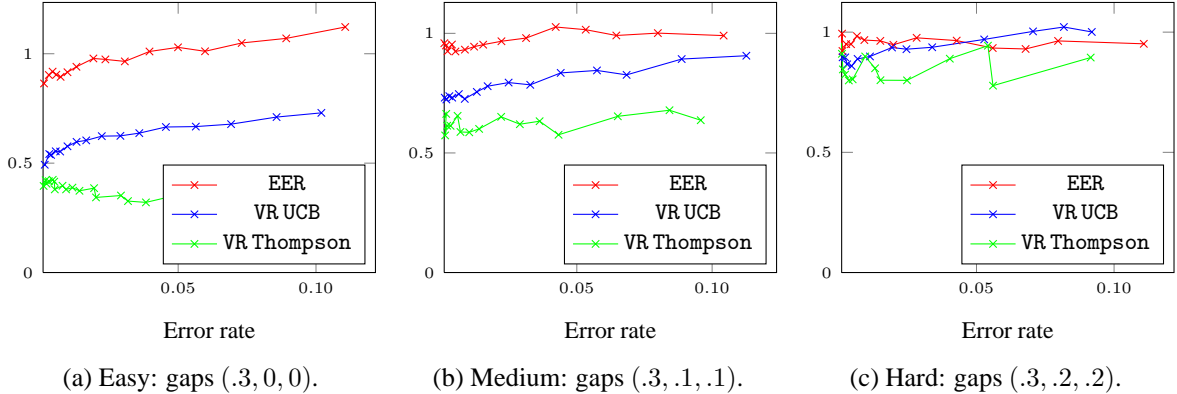

For \AlgUCB we tested different parameter values for the parameter $C$ which balances between exploration and exploitation. We obtained the best results for a range of workloads for $C=1$ and this is the value we use in all the experiments. For \AlgThompson we start with a uniform prior on each crowd.

\begin{figure}[p]
        \centering
        {\bf \Large Additional plots for crowd-selection algorithms} \vspace{5mm}

        \begin{subfigure}[b]{0.5\textwidth}
                \centering
                \begin{tikzpicture}
\begin{axis}[
        width=7cm,
        xlabel=$\errorC$,
        ylabel=Average cost,
        legend pos=north west
]
\addplot table [x=Threshold, y=RR_cost] {chart_bias0_full.data};
\addlegendentry{\AlgRRshort}
\addplot table [x=Threshold, y=3-ph_cost] {chart_bias0_full.data};
\addlegendentry{\AlgEERshort}
\addplot table [x=Threshold, y=UCB_cost] {chart_bias0_full.data};
\addlegendentry{\AlgUCBshort}
\addplot table [x=Threshold, y=Thom_cost] {chart_bias0_full.data};
\addlegendentry{\AlgThompsonshort}
\end{axis}
\end{tikzpicture}
        \end{subfigure}%
        \begin{subfigure}[b]{0.5\textwidth}
                \centering
                \begin{tikzpicture}
\begin{axis}[
        width=7cm,
        xlabel=$\errorC$,
        ylabel=Error rate,
        legend pos=north east
]
\addplot table [x=Threshold, y=RR_errors] {chart_bias0_full.data};
\addlegendentry{\AlgRRshort}
\addplot table [x=Threshold, y=3-ph_errors] {chart_bias0_full.data};
\addlegendentry{\AlgEERshort}
\addplot table [x=Threshold, y=UCB_errors] {chart_bias0_full.data};
\addlegendentry{\AlgUCBshort}
\addplot table [x=Threshold, y=Thom_errors] {chart_bias0_full.data};
\addlegendentry{\AlgThompsonshort}
\end{axis}
\end{tikzpicture}
        \end{subfigure}%

        The easy workload: gaps $(.3,0,0)$. \vspace{2mm}

        \centering
        \begin{subfigure}[b]{0.5\textwidth}
                \centering
                \begin{tikzpicture}
\begin{axis}[
        width=7cm,
        xlabel=$\errorC$,
        ylabel=Average cost,
        legend pos=north west
]
\addplot table [x=Threshold, y=RR_cost] {chart_bias1_full.data};
\addlegendentry{\AlgRRshort}
\addplot table [x=Threshold, y=3-ph_cost] {chart_bias1_full.data};
\addlegendentry{\AlgEERshort}
\addplot table [x=Threshold, y=UCB_cost] {chart_bias1_full.data};
\addlegendentry{\AlgUCBshort}
\addplot table [x=Threshold, y=Thom_cost] {chart_bias1_full.data};
\addlegendentry{\AlgThompsonshort}
\end{axis}
\end{tikzpicture}
        \end{subfigure}%
        \begin{subfigure}[b]{0.5\textwidth}
                \centering
                \begin{tikzpicture}
\begin{axis}[
        width=7cm,
        xlabel=$\errorC$,
        ylabel=Error rate,
        legend pos=north east
]
\addplot table [x=Threshold, y=RR_errors] {chart_bias1_full.data};
\addlegendentry{\AlgRRshort}
\addplot table [x=Threshold, y=3-ph_errors] {chart_bias1_full.data};
\addlegendentry{\AlgEERshort}
\addplot table [x=Threshold, y=UCB_errors] {chart_bias1_full.data};
\addlegendentry{\AlgUCBshort}
\addplot table [x=Threshold, y=Thom_errors] {chart_bias1_full.data};
\addlegendentry{\AlgThompsonshort}
\end{axis}
\end{tikzpicture}
        \end{subfigure}%

        The medium workload: gaps $(.3,.1,.1)$. \vspace{2mm}

        \centering
        \begin{subfigure}[b]{0.5\textwidth}
                \centering
                \begin{tikzpicture}
\begin{axis}[
        width=7cm,
        xlabel=$\errorC$,
        ylabel=Average cost,
        legend pos=north west
]
\addplot table [x=Threshold, y=RR_cost] {chart_bias2_full.data};
\addlegendentry{\AlgRRshort}
\addplot table [x=Threshold, y=3-ph_cost] {chart_bias2_full.data};
\addlegendentry{\AlgEERshort}
\addplot table [x=Threshold, y=UCB_cost] {chart_bias2_full.data};
\addlegendentry{\AlgUCBshort}
\addplot table [x=Threshold, y=Thom_cost] {chart_bias2_full.data};
\addlegendentry{\AlgThompsonshort}
\end{axis}
\end{tikzpicture}
        \end{subfigure}%
        \begin{subfigure}[b]{0.5\textwidth}
                \centering
                \begin{tikzpicture}
\begin{axis}[
        width=7cm,
        xlabel=$\errorC$,
        ylabel=Error rate,
        legend pos=north east
]
\addplot table [x=Threshold, y=RR_errors] {chart_bias2_full.data};
\addlegendentry{\AlgRRshort}
\addplot table [x=Threshold, y=3-ph_errors] {chart_bias2_full.data};
\addlegendentry{\AlgEERshort}
\addplot table [x=Threshold, y=UCB_errors] {chart_bias2_full.data};
\addlegendentry{\AlgUCBshort}
\addplot table [x=Threshold, y=Thom_errors] {chart_bias2_full.data};
\addlegendentry{\AlgThompsonshort}
\end{axis}
\end{tikzpicture}
        \end{subfigure}%

        The hard workload: gaps $(.3,.2,.2)$. \vspace{2mm}

        \caption{Crowd-selection algorithms:
            Average cost and error rate vs. $\errorC$.}

        \label{fig:multiCrowd-details}
\end{figure}

\xhdr{Results and discussion.}
For the easy workload the cost of \AlgUCB is about $60\%$ to $70\%$ of the cost of \AlgRR. \AlgThompson is significantly better, with a cost of about $40\%$ the cost of \AlgRR.
For the medium workload the cost of \AlgUCB is about $80\%$ to $90\%$ of the cost of \AlgRR. \AlgThompson is significantly better, with a cost of about $70\%$ the cost of \AlgRR.
For the hard workload the cost of \AlgUCB is about $90\%$ to $100\%$ of the cost of \AlgRR. \AlgThompson is better, with a cost of about $80\%$ to $90\%$ the cost of \AlgRR.
While our analysis predicts that \AlgEER should be (somewhat) better than \AlgRR, our experiments do not confirm this for every error rate.

As the gap of the other crowds approaches that of the best crowd, choosing the best crowd becomes less important, and so the advantage of the adaptive algorithms over \AlgRR diminishes. In the extreme case where all crowds have the same gap all the algorithms would perform the same with an error rate that depends on the stopping rule. We conclude that \AlgUCB provides an advantage, and \AlgThompson provides a significant advantage, over the naive scheme of \AlgRR.

\section{Related work}
\label{sec:related-work}

For general background on crowdsourcing and human computation, refer to \citet{Law11}. Most of the work on crowdsourcing is usually done using platforms like \emph{Amazon Mechanical Turk} or \emph{CrowdFlower}.
Results using those platforms have shown that majority voting is a good approach to achieve quality~\cite{Snow08}. Get Another Label~\cite{Sheng08} explores adaptive schemes for the single-crowd case under Baysian assumptions (while our focus is on multiple-crowds and regret under non-Bayesian uncertainty). A study on machine translation quality uses preference voting  for combining ranked judgments~\cite{Callison-Burch09}.
Vox Populi~\cite{Dekel09} suggests to prune low quality workers, however their approach is not adaptive and their analysis does not provide regret bounds (while our focus is on adaptively choosing which crowds to exploit and obtaining regret bounds against an optimal algorithm that knows the quality of each crowd).
Budget-Optimal Task Allocation~\cite{KOS11} focuses on a non-adaptive solution to the task allocation problem given a prior distribution on both tasks and judges (while we focus adaptive solutions and do not assume priors on judges or tasks).
From a methodology perspective, CrowdSynth  focuses on addressing consensus tasks by leveraging
supervised learning~\cite{Kamar12}.
Adding a crowdsourcing layer as part of a computation engine is a very recent line of research. An example is CrowdDB, a system for crowdsourcing which includes human computation for processing queries~\cite{Franklin11}. CrowdDB offers basic quality control features, but we expect adoption of more advanced techniques as those systems become more available within the community.

Multi-armed bandits (MAB) have a rich literature in Statistics, Operations Research, Computer Science and Economics. A proper discussion of this literature is beyond our scope; see \cite{CesaBL-book} for background. Most relevant to our setting is the work on prior-free MAB with stochastic rewards: \cite{Lai-Robbins-85,bandits-ucb1} and the follow-up work, and Thompson heuristic~\cite{Thompson-1933}. Recent work on Thompson heuristic includes \cite{Thompson-Bing-icml10,Thompson-Scott10,Thompson-nips11,Thompson-analysis-arxiv11}.

\asedit{
Our setting is superficially similar to \emph{budgeted MAB}, a version of MAB where the goal is to find the best arm after a fixed period of exploration (e.g., \cite{Tsitsiklis-bandits-04,Bubeck-alt09}). Likewise, there is some similarity with the work on \emph{budgeted active learning} (e.g. \cite{Lizotte-uai03,Madani-uai04,Kaplan-stoc05}), where an algorithm repeatedly chooses instances and receives correct labels for these instances, with a goal to eventually output the correct hypothesis. The difference is that in the \problem, an algorithm repeatedly chooses among \emph{crowds}, whereas in the end the goal is to pick the correct \emph{option}; moreover, the true ``reward" or ``label" for each chosen crowd is not revealed to the algorithm and is not even well-defined.}

Settings similar to stopping rules for a single crowd (but with somewhat different technical objectives) were considered in prior work, e.g. \cite{Bechhofer59,Ramey79,Bechhofer85,Dagum-sicomp00,Mnih-icml08}.

\asedit{
In a very recent concurrent and independent work, \cite{Jenn-aaai12,Jenn-icml13,Chen-icml13,Tran-aamas13} studied related, but technically incomparable settings. The first three papers consider adaptive task assignment with multiple tasks and a budget constraint on the total number or total cost of the workers. In \cite{Jenn-aaai12,Jenn-icml13} workers arrive over time, and the algorithm selects which tasks to assign. In \cite{Chen-icml13}, in each round the algorithm chooses a worker and a task, and Bayesian priors are available for the difficulty of each task and the skill level of each worker (whereas our setting is prior-independent). Finally, \citet{Tran-aamas13} studies a \emph{non-adaptive} task assignment problem where the algorithm needs to distribute a given budget across multiple tasks with known per-worker costs.
}

\section{Open questions}
\label{sec:questions}

\OMIT{
Research in human computation algorithms is a new and exciting area that has a lot of potential. In this paper, we presented the \problem and introduced a number of algorithms. Our approach was inspired by real world problems from a commercial search engine such as relevance assessment and training set construction.}

\xhdr{The \problem.} The main open questions concern crowd-selection algorithms for the randomized benchmark. First, we do not know how to handle non-uniform costs. Second, we conjecture that our algorithm for uniform costs can be significantly improved. Moreover, it is desirable to combine guarantees against the randomized benchmark with (better) guarantees against the deterministic benchmark.

Our results prompt several other open questions. First, while we obtain strong provable guarantees for $\AlgUCB$, it is desirable to extend these or similar guarantees to $\AlgThompson$, since this algorithm performs best in the experiments. Second, is it possible to significantly improve over the composite stopping rules? Third, is it advantageous to forego our "independent design" approach and design the crowd-selection algorithms jointly with the stopping rules?

\xhdr{Extended models.} It is tempting to extend our model in several directions listed below. First, while in our model the gap of each crowd does not change over time, it is natural to study settings with bounded or ``adversarial'' change; one could hope to take advantage of the tools developed for the corresponding versions of MAB. Second, as discussed in the introduction, an alternative model worth studying is to assign a monetary penalty to a mistake, and optimize the overall cost (i.e., cost of labor minus penalty). Third, one can combine the \problem with learning across multiple related microtasks.

\xhdr{Acknowledgements.}
We thank Ashwinkumar Badanidiyuru, Sebastien Bubeck, Chien-Ju Ho, Robert Kleinberg and Jennifer Wortman Vaughan for stimulating discussions on our problem and related research. Also, we thank Rajesh Patel, Steven Shelford and Hai Wu from Microsoft Bing for insights into the practical aspects of crowdsourcing. Finally, we are indebted to the anonymous referees for sharp comments which have substantially improved presentation. In particular, we thank anonymous reviewers for pointing out that our index-based algorithm can be interpreted via virtual rewards.


\begin{small}
\bibliographystyle{alpha}
\bibliography{bib-abbrv,bib-bandits,bib-slivkins,bib-crowdsourcing}
\end{small}

\appendix

\section{A missing proof from Section~\ref{sec:randomized-benchmark}}

In the proof of Lemma~\ref{lm:benchmarks-2options}, we have used the following general vector inequality:

\begin{claim}\label{cl:standard-inequality}
    $(\vec{x}\cdot \vec{\alpha})(\vec{x}\cdot \vec{\beta}) \geq \min_i\alpha_i \beta_i$
for any vectors $\vec{\alpha},\vec{\beta}\in \R^k_+$ and any $k$-dimensional distribution $\vec{x}$.
\end{claim}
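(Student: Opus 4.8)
The plan is to reduce the claim to a single application of the Cauchy--Schwarz inequality. Write $m = \min_i \alpha_i\beta_i$. If $m=0$ the inequality is immediate, since the left-hand side is a product of two nonnegative numbers and is therefore nonnegative; so I may assume $m>0$. The key observation is that, because $x_i,\alpha_i,\beta_i\geq 0$, each inner product can be rewritten as a sum of squares: $\vec{x}\cdot\vec{\alpha} = \sum_i (\sqrt{x_i\alpha_i})^2$ and $\vec{x}\cdot\vec{\beta} = \sum_i (\sqrt{x_i\beta_i})^2$.

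Applying Cauchy--Schwarz to the vectors with coordinates $\sqrt{x_i\alpha_i}$ and $\sqrt{x_i\beta_i}$ then yields
$$
(\vec{x}\cdot\vec{\alpha})(\vec{x}\cdot\vec{\beta})
    \geq \Big(\textstyle\sum_i \sqrt{x_i\alpha_i}\,\sqrt{x_i\beta_i}\Big)^2
    = \Big(\textstyle\sum_i x_i\sqrt{\alpha_i\beta_i}\Big)^2.
$$
To finish I would bound $\sqrt{\alpha_i\beta_i}\geq \sqrt{m}$ termwise and use that $\vec{x}$ is a distribution, i.e. $\sum_i x_i = 1$, to conclude $\sum_i x_i\sqrt{\alpha_i\beta_i}\geq \sqrt{m}$. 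Squaring gives the desired bound $m$.

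There is no real obstacle here; the only content is recognizing the correct Cauchy--Schwarz split and disposing of the degenerate case $m=0$. If one prefers to keep the appendix self-contained without invoking a named inequality, an alternative elementary route is to expand the product directly as $\sum_{i,j} x_i x_j\,\alpha_i\beta_j$, separate the diagonal from the symmetrized off-diagonal terms, and apply the AM--GM bound $\alpha_i\beta_j + \alpha_j\beta_i \geq 2\sqrt{(\alpha_i\beta_i)(\alpha_j\beta_j)} \geq 2m$. Combined with $\sum_{i,j} x_i x_j = (\sum_i x_i)^2 = 1$, this again gives a lower bound of $m$ and reproduces the claim.
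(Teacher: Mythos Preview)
Your proof is correct and considerably cleaner than the paper's. The paper proceeds by induction on $k$: it peels off the first coordinate, writes $(\vec{x}\cdot\vec{\alpha})(\vec{x}\cdot\vec{\beta}) = (x_1\alpha_1 + A)(x_1\beta_1 + B)$ with $A,B$ the tails, applies the inductive hypothesis to $AB/(1-x_1)^2$, and then separately bounds the cross term $\alpha_1 B + \beta_1 A$ using the elementary inequality $\tfrac{\alpha_i}{\alpha_1} + \tfrac{\beta_i}{\beta_1} \geq 2$ (which itself follows from $\alpha_1\beta_1 \leq \alpha_i\beta_i$ and AM--GM). After reassembling the three pieces one gets exactly $\alpha_1\beta_1$.

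Your Cauchy--Schwarz argument collapses all of this into one line: the split $a_i=\sqrt{x_i\alpha_i}$, $b_i=\sqrt{x_i\beta_i}$ immediately gives $(\vec{x}\cdot\vec{\alpha})(\vec{x}\cdot\vec{\beta}) \geq \big(\sum_i x_i\sqrt{\alpha_i\beta_i}\big)^2 \geq m$. Your alternative direct expansion via AM--GM on the symmetrized off-diagonal terms is essentially the same idea unpacked, and is closer in spirit to what the paper's induction is doing implicitly---but without the bookkeeping of an inductive step. Either of your routes would be a strict improvement on the appendix proof; the paper's stated motivation for including a proof at all was only the absence of a convenient reference, so the shorter argument serves that purpose better.
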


This inequality appears standard, although we have not been able to find a reference. We supply is a self-contained proof below.

\begin{proof}
W.l.o.g. assume
    $\alpha_1 \beta_1 \leq \alpha_2 \beta_2 \leq \ldots \leq \alpha_k \beta_k$.
Let us use induction on $k$, as follows. Let
\[ f(\vec{x})
    \triangleq (\vec{x}\cdot \vec{\alpha})(\vec{x}\cdot \vec{\beta})
    = (x_1 \alpha_1 + A)(x_1\beta_1 +B) \]
where
\[    \begin{cases}
        A &= \sum_{i>1} x_i \alpha_i \\
        B &= \sum_{i>1} x_i \beta_i
    \end{cases}.\]
Denoting $p = x_1$, we can write the above expression as
\begin{align}\label{eq:standard-inequality-1}
    f(\vec{x}) = p^2 \alpha_1 \beta_1 + p(\alpha_1 B + \beta_1 A) + AB.
\end{align}

First, let us invoke the inductive hypothesis to handle the $AB$ term in~\eqref{eq:standard-inequality-1}. Let $y_i = \tfrac{x_i}{1-p}$ and note that
    $\{y_i\}_{i>1}$
is a distribution. It follows that
    $\tfrac{A}{1-p} \tfrac{B}{1-p} \geq \alpha_2 \beta_2$.
In particular,
    $AB\geq (1-p)^2 \alpha_1\beta_1$.

Next, let us handle the second summand in~\eqref{eq:standard-inequality-1}. Let us re-write it to make things clearer:
\begin{align}
\alpha_1\, B + \beta_1\, A
    &= (1-p)\; \sum_{i>1}\; \alpha_1\, y_i\, \beta_i + \beta_1\, y_i\, \alpha_i \nonumber \\
    &= (1-p)\,\alpha_1 \beta_1 \sum_{i>1}\; y_i
        \left(\frac{\alpha_i}{\alpha_1} + \frac{\beta_i}{\beta_1} \right). \label{eq:standard-inequality-2}
\end{align}
We handle the term in big brackets using the assumption that
    $\alpha_1 \beta_1 \leq \alpha_i \beta_i$.
By this assumption it follows that
    $\tfrac{\alpha_i}{\alpha_1} \geq \tfrac{\beta_1}{\beta_i}$
and therefore
    $\tfrac{\alpha_i}{\alpha_1} + \tfrac{\beta_i}{\beta_1} \geq
        \tfrac{\beta_1}{\beta_i} + \tfrac{\beta_i}{\beta_1} \geq 2$.
Plugging this into~\eqref{eq:standard-inequality-2}, we obtain
    \[ \alpha_1 B + \beta_1 A \geq 2(1-p)\, \alpha_1 \beta_1.\]

Finally, going back to~\eqref{eq:standard-inequality-1} we obtain
\begin{align*}
 f(\vec{x})
    &\geq p^2 \,\alpha_1\beta_1 + 2p(p-1) \,\alpha_1\beta_1 + (1-p)^2 \,\alpha_1\beta_1 \\
    &= \alpha_1\beta_1. \qedhere
 \end{align*}
\end{proof}

\OMIT{
\xhdr{Comparison of worst-case error rates.}
Fix a single-crowd stopping rule $R_0$. We would like to argue that the worst-case error rate of an arbitrary crowd-selection rule $\A$, when used with $R_0$, is not much smaller than the worst-case error rate of the benchmarks, i.e. that of $R_0$.

We will need a mild assumption on $\A$: essentially, that it never commits to stop using any given crowd. Formally, a crowd-selection algorithm $\A$ is called \emph{non-committing} if for every problem instance, each time $t$, and every crowd $i$, it will choose crowd $i$ at some time after $t$ with probability one. (Here we consider a run of $\A$ that continues indefinitely, without being stopped by the stopping rule.)

\begin{lemma}\label{lm:error-rate-LB}
Let $R_0$ be a symmetric single-crowd stopping rule with worst-case error rate $\rho$. Let $\A$ be a non-committing crowd-selection algorithm, and let $R$ be the composite stopping rule based on $R_0$ which does not use the total crowd. If $\A$ is used in conjunction with $R$, the worst-case error rate is at least $\rho\,(1-2k\rho)$, where $k$ is the number of crowds.
\end{lemma}

\begin{proof}
Suppose $R_0$ attains the worst-case error rate for a crowd with gap $\gap$. Consider the problem instance in which one crowd (say, crowd $1$) has gap $\gap$ and all other crowds have gap $0$. Let $R_{(i)}$ be the instance of $R_0$ that takes inputs from crowd $i$, for each $i$. Let $E$ be the event that each $R_{(i)}$, $i>1$ does not ever stop. Let $E'$ be the event that $R_{(1)}$ stops and makes a mistake. These two events are independent, so the error rate of $R$ is at least $\Pr[E]\, \Pr[E']$. By the choice of the problem instance, $\Pr[E']=\rho$. And by Lemma~\ref{lm:LB-infty}, $\Pr[E] \geq 1-2k\rho$. It follows that the error rate of $R$ is at least $\rho\, (1-2k\rho)$.
\end{proof}

}

\end{document}